\documentclass{article} 
\usepackage{nips14submit_e,times}
\usepackage{hyperref}
\usepackage{url}

\usepackage{amsfonts}
\usepackage{textcomp}
\usepackage{amssymb}
\usepackage{marvosym}
\usepackage{amsmath}
\usepackage{verbatim}
\usepackage{pifont}
\usepackage{graphicx}
\usepackage{algorithm}
\usepackage{algorithmic}
\usepackage{amsthm}

\usepackage[raggedright]{sidecap}      

\usepackage{hyperref}	
\hypersetup{
  colorlinks,
  citecolor=blue,
  linkcolor=blue,
  urlcolor=blue}
 \usepackage[comma,authoryear]{natbib}

\usepackage{hongweiCommonMacro}
\usepackage{hwCrowdHeader}

\title{Cheaper and Better:\\ Selecting Good Workers for Crowdsourcing}

\author{Initially drafted by Hongwei Li and Qiang Liu}
\author{
Hongwei Li\\
Department of Statistics\\
Univeristy of California, Berkeley\\
\texttt{hwli@stat.berkeley.edu} \\
\And
Qiang Liu\\
Department of Computer Science \\
Univeristy of California, Irvine\\
\texttt{qliu1@uci.edu} \\
}

\nipsfinalcopy 

\begin{document}

\maketitle

\begin{abstract}
Crowdsourcing provides a popular paradigm for data collection at scale.
We study the problem of selecting subsets of workers from a given worker pool to maximize the accuracy under a budget constraint.   
One natural question is whether we should hire as many workers as the budget allows, or restrict on a small number of  top-quality workers. 
By theoretically analyzing the error rate of a typical setting in crowdsourcing, we frame the worker selection problem into a combinatorial optimization problem and propose an  algorithm to solve it efficiently.  
Empirical results on both simulated and real-world datasets show that our algorithm is able to select a small number of high-quality workers, and performs as good as, sometimes even better than, the much larger crowds as the budget allows. 

\end{abstract}



\section{Introduction}\label{sec:intro}

The recent rise of the crowdsourcing approach has made it possible to collect large amounts of human-labeled data and solve challenging problems that require human intervention at a large scale and at a relatively low cost. In micro-task marketplaces such as Amazon Mechanical Turk, the requestors can hire large numbers of online crowd workers to complete human intelligence tasks (HITs) in a short time and with payment as low as several cents per task. 
Unfortunately, because of the low pay and inexperience of the workers, their labeling qualities are often much lower than those of experts. 
A common solution is to add redundancy, asking many crowd workers to answer the same questions, 
and aggregating their answers; the combined results of the crowds are often  much better than that of an individual worker, sometimes even as good as that of the experts -- a phenomenon known as \emph{wisdom of crowds}. 

However, because the crowd workers often have different reliabilities due to their diverse backgrounds, it is important to weight their answers properly when aggregating their answers. 
A large body of work has been proposed to deal with the uncertainty and diversity on the workers' reliabilities;  
these methods often have a form of weighted majority voting where the answers of the majority of the workers are selected, with a weighting scheme that accounts the importance of the different workers according to their reliabilities. 
The workers' reliabilities can be estimated either using gold standard questions with known answers \citep[e.g.,][]{von2008recaptcha, Liu2013score}, 
or by statistical methods such as Expectation-Maximization (EM) \citep[see, e.g.,][]{Dawid79jrss, Whitehill2009nips, Karger2011nips, Liu2012, Zhou2012}. 


Our work is motivated by a natural question: 
do more crowd workers necessarily yield better aggregated results than less workers? 
The idea of \emph{wisdom of crowds} seems to suggest a confirmative answer, since ``\emph{larger crowds} should be \emph{wiser}". 
From a Bayesian perspective, this would be true if we had perfect knowledge about the workers' prediction model, and 
we were able to use an oracle aggregation procedure that performs exact Bayesian inference. 
However, in practice, because the workers' prediction model and reliabilities 
are never known perfectly,  
we run the risk of adding noisy information as we increase the number of workers. In the extreme, there may exist a large number of ``spammers", who submit completely random answers rather than good-faith attempts to label; 
adding these spammers would certatinly deteriorate the results, unless we are able to identify them perfectly, and assign them with zero-weights in the label aggregation algorithm. 
Even if there exist no extreme spammers, the median-level workers may still decrease the overall accuracy if they dominate over the small number of high-quality workers. 
In fact, a recent empirical study \citep{soll2013} shows that the aggregated results of a small number of (3 to 6) high-quality workers are often more accurate than those of much larger crowds. 

In this work, we study this phenomenon by formulating a worker selection problem under a budget constraint.  
Assume we have a pool of workers whose reliabilities have been tested by a small number of gold standard questions; under certain label aggregation algorithm, we want to select a subset of workers that maximizes the accuracy, with a budget constraint that the number of workers assigned {per task} is no more than $\Kmax$. 
A na{\"i}ve and commonly used procedure is to simply select the top $\Kmax$ workers that have the highest reliabilities. 
However, 
due to the noisy nature of the label aggregation algorithms (e.g., majority voting or EM), 
 selecting all the $\Kmax$ workers does not necessarily give the best accuracy, and may cause a waste of the resource. 
 We study this problem under a simple  label aggregation algorithm based on weighted majority voting, 
and propose a worker selection method that is able to select fewer ($\leq \Kmax$) top-ranked workers, while achieve almost the same, or even better aggregated solutions than the na{\"i}ve method that uses more (all the top $K$) workers. 

Our method is derived by framing the problem into a combinatorial optimization that minimizes an upper bound of the error rate, and deriving a globally optimal algorithm that selects  a group of top-ranked workers that optimize the upper bound of the error rate. 
We demonstrate the efficiency of our algorithm by comprehensive experiments on a number of real-world datasets.

{\bf Related work.}
There are many literatures on estimating the workers' reliabilities and eliminating the spammers based on a predefined threshold \citep[see e.g.,][]{Raykar2012eliminate, Joglekar2013}. 
Our work instead focuses on selecting a minimum number of highest-ranked workers while discarding the others (which are not necessarily spammers). 
Note that our method has the advantage of requiring no pre-specified threshold parameters.
Our work also should be distinguished with another line of research on online assignment for crowdsouring \citep[][ etc.]{Chen2013, Ho2013}, which have different objectives and purposes from our work.    

\textbf{Outline.} 
The rest of the paper is organized as follows.
We introduce the background and the problem setting in Section~\ref{sec:background}. 
We then formulate the worker selection problem into a combinatorial optimization problem and derive our algorithm in Section~\ref{sec:workerSelect}. The numerical experiments are presented in Section~\ref{sec:expResults}. 
We give further discussions in Section \ref{sec:discuss} and conclude the paper in Section~\ref{sec:conclusion}.

\section{Background and problem setting}
\label{sec:background}

Assume there are $M$ crowd workers and $N$ items (or questions) each with labels from $L$ classes. For notation convenience, we denote the set of workers by $\workset = \M$, the set of items  by $\N$ and the set of label classes by $\Labset$, where we use $\M$ to denote the set of first $M$ integers. We assume each item $j$ is associated with an unknown true label $\yj\in \Labset$, $j\in\N$. 
We also assume that we have $n$ \emph{control} (or \emph{gold standard}) questions whose true labels $y_j' \in\Labset, ~j\in[n]$ are known. 

When item $j$ is assigned to worker $i$ for labeling, we get a possibly inaccuracy answer from the worker, which we denote by $\zij\in\Labset$.  
The workers often have different expertise and attitude, and hence have different reliabilities. 
We assume $i$-th worker labels the items correctly with probability $\wi$, that is, $\wi =\P(\zij= \yj)$.  
%
%
%
In addition, assume we have an estimation of the workers' reliability $\hwi$, which can be estimated either based on the workers' performance on the control items, or by probabilistic inference algorithms like expectation-maximization (EM). 
With a known reliability estimation $\hwi$, most label aggregation algorithms, including the na{\"i}ve majority voting and EM, can be written into a form of weighted majority voting, 
\begin{align}
\label{equ:wmvGeneral}
\hat y_j = \argmax_{k \in \Labset}  \sum_{i\in\S}  \wf(\hwi) \cdot \I{\zij=k},  
\end{align}
where $\wf(\hwi)$ is a monotonic weighting function that decides how much the answers of worker $i$ contribute to the voting according to the reliability $\hwi$,  
and $\I{\cdot}$ is the indicate function. For majority voting, we have $\wf_{\text{mv}}(\hwi)= 1$, which ignores the diversity of the workers and may performance badly in practice. 
In contrast, a log-odds weighting function $\wflog(\hwi)= \mathrm{logit}(\hwi) - \mathrm{logit}(1/\nL)$, where $\mathrm{logit}(\hwi)\overset{def}{=} \log (\frac{\hwi}{1-\hwi})$, can be derived using Bayesian rule under a simple model that assumes uniform error across classes; here $1/\nL$ is the probability of random guessing among $L$ classes. 
However, in practice, the log-odds may be over confident (growing to infinite) when $\hat w_i$ is close to 1 or 0. 
A linearized version $\wflin(\hwi) = \hwi - 1/\nL$ has better stability, and is simpler for theoretical analysis \citep{Li2013b}.

Note that both of $\wflog$ and $\wflin$ have properties that are desirable for general weighting functions:  
Both are monotonic increasing functions of $\hwi$, and take zero value if $\hwi  = 1/\nL$ (to exclude the labels from random guessers); they are both positive if $\hwi > 1/\nL$ (better than random guessers), and are both negative if $\hwi <1/\nL$ (worse than random guessers).  
These common properties make $\wflin$ and $\wflog$ work similarly in practice.  But since $\wflin$ is more stable and simpler for theoretical analysis, we will focus on the linear weighted function $\wflin$ for our further development on the worker selection problem, that is, the labels are aggregated via (referred as \emph{WMV-linear}), 
\begin{eqnarray}\label{def:wmv_linear}
\hat y_j = \argmax_{k \in \Labset}  \sum_{i\in\S}  (\nL \hwi -1) \cdot \I{\zij=k}. 
\end{eqnarray}
In the next section, we study the worker selection problem and propose an efficient algorithm based on the analysis of the WMV-linear aggregation method.

\section{Worker selection by combinatorial optimization}\label{sec:workerSelect}

The problem of selecting an optimal set of workers requires predicting the error rate with a given worker set, which is unfortunately intractable in general. However, it is convenient to obtain an upper bound of the error rate for the linear weighted majority voting. 

\begin{thm}\label{thm:erBound}
Given a set $S$ of workers, using the weighted majority voting in \eqref{def:wmv_linear} with linear weights $\wflin$ and an unbiased estimator of the reliabilities $\hua{\hwi}_{i\in\S}$ that satisfies $\E[\hwi] = \wi$. If the workers' labels are generated independently according the following probability
\begin{eqnarray}\label{model}
\P(\zij=l|\yj=k)= 
\begin{cases}
\wi  &\text{ if~~} l=k, \\
\frac{1-\wi}{\nL-1} &\text{ if~~} l\neq k.
\end{cases}
\end{eqnarray}
Then we have
\begin{align}
&\inv{N}\sumj \P(\hyj\neq \yj)\leq \exp\left[ {-  \frac{2{\FS}^2}{\nL^2(\nL-1)^2} + \ln(\nL-1)} \right],\label{ineq:erBound}\\
&\text{where~~~~~~~~~~} 
\FS = \inv{\sqrt{|\S|}} \sum_{i\in\S} (\nL \wi - 1)^2.  \label{def:F}
\end{align}
\end{thm}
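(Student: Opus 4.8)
The plan is to bound $\P(\hyj\neq\yj)$ uniformly over the items $j$ and then average over $j$. Fix an item $j$; since the resulting bound will not depend on the value of the true label, assume $\yj=k$. The weighted vote assigns each class $c$ the score $\sum_{i\in\S}(\nL\hwi-1)\I{\zij=c}$, and an error $\hyj\neq k$ forces the score of the winning label $l\neq k$ to be at least that of $k$; hence, by a union bound over the $\nL-1$ wrong classes,
\[
\P(\hyj\neq k)\;\le\;\sum_{l\neq k}\P\!\left(M_l\le 0\right),\qquad
M_l\;:=\;\sum_{i\in\S}(\nL\hwi-1)\bigl(\I{\zij=k}-\I{\zij=l}\bigr).
\]
This single step is what contributes the additive $\ln(\nL-1)$ in \eqref{ineq:erBound}.

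For a fixed wrong label $l$, I would view $M_l$ as a sum of independent summands and apply a one-sided Hoeffding (Chernoff) bound. Independence holds across workers by the model assumption, and the reliability estimates $\{\hwi\}$, being functions of the control questions, are independent of the crowd labels $\{\zij\}$ on the test items, so $\E[(\nL\hwi-1)(\I{\zij=k}-\I{\zij=l})]=\E[\nL\hwi-1]\cdot\E[\I{\zij=k}-\I{\zij=l}]$. Using $\E[\hwi]=\wi$ together with model \eqref{model} --- for which $\E[\I{\zij=k}-\I{\zij=l}]=\wi-\tfrac{1-\wi}{\nL-1}=\tfrac{\nL\wi-1}{\nL-1}$ --- gives
\[
\E[M_l]\;=\;\frac{1}{\nL-1}\sum_{i\in\S}(\nL\wi-1)^2\;=\;\frac{\sqrt{|\S|}}{\nL-1}\,\FS\;>\;0.
\]
Hoeffding's inequality then bounds $\P(M_l\le 0)=\P\bigl(M_l-\E[M_l]\le-\E[M_l]\bigr)$ by $\exp\bigl(-2\E[M_l]^2/(\sum_{i\in\S}R_i^2)\bigr)$, where $R_i$ is the length of an interval a.s.\ containing the $i$-th summand. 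Since $\tfrac{2\FS^2}{\nL^2(\nL-1)^2}=\tfrac{2\E[M_l]^2}{|\S|\nL^2}$, getting the stated constant amounts to taking $R_i=\nL$ for each worker, which is legitimate because the $i$-th summand is $0$ unless worker $i$ answers $k$ or $l$, in which case it equals $\pm(\nL\hwi-1)$, and $\nL\hwi-1$ ranges over the length-$\nL$ interval $[-1,\nL-1]$. Combining with the union bound --- and noting the estimate is uniform in $j$ --- yields
\[
\inv{N}\sumj\P(\hyj\neq\yj)\;\le\;(\nL-1)\exp\!\left(-\frac{2\FS^2}{\nL^2(\nL-1)^2}\right)\;=\;\exp\!\left(-\frac{2\FS^2}{\nL^2(\nL-1)^2}+\ln(\nL-1)\right).
\]

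The delicate point is the Hoeffding step. Applied bluntly to $M_l$ with its \emph{unconditional} range $2(\nL-1)$, it only gives the weaker exponent $\FS^2/\bigl(2(\nL-1)^4\bigr)$; sharpening to $2\FS^2/\bigl(\nL^2(\nL-1)^2\bigr)$ requires exploiting that each summand lies, up to sign, in a length-$\nL$ interval --- but \emph{which} interval ($[-1,\nL-1]$ or $[1-\nL,1]$) depends on whether the worker answered $k$ or $l$, and that event is itself random with a conditional mean that need not be positive. Making this rigorous, either by integrating that randomness out inside the moment-generating function or by a careful conditional argument, while cleanly isolating the factor $\nL\hwi-1$ (which is exactly where independence of $\hwi$ from the crowd labels is used), is the main obstacle I anticipate; the remaining steps are the routine bookkeeping above.
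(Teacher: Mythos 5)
Your proposal follows the paper's proof essentially step for step: condition on the true label, union-bound over the $\nL-1$ wrong classes (the source of the additive $\ln(\nL-1)$), use the independence of the control-question estimates $\hwi$ from the test labels $\zij$ to compute the expected score gap $\E[M_l]=\frac{\sqrt{|\S|}}{\nL-1}\FS$, and close with Hoeffding using a per-summand range of length $\nL$. All of that bookkeeping is correct and identical to what the paper does.

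The one step you flag as delicate is precisely the step the paper does not actually justify. In the paper's notation the $i$-th summand is $\xi_i=(\nL\hwi-1)\bigl(\I{\zij=l}-\I{\zij=k}\bigr)$, and the supplementary simply asserts ``$-1\le \xi_i\le \nL-1$''. But when worker $i$ answers the \emph{true} class $k$, the summand equals $-(\nL\hwi-1)$, which can be as small as $1-\nL$; the genuine almost-sure range is $[-(\nL-1),\,\nL-1]$, of length $2(\nL-1)$, and plugging that into Hoeffding gives only your weaker exponent $\FS^2/\bigl(2(\nL-1)^4\bigr)$ once $\nL\ge 3$. The length-$\nL$ interval containing the summand is indeed random (it is $[-1,\nL-1]$ or $[1-\nL,1]$ or $\{0\}$ depending on the worker's answer), and Hoeffding's lemma in its standard form requires a \emph{fixed} interval. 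For $\nL=2$ the two ranges coincide ($2(\nL-1)=\nL$) and the argument is airtight. So you have not missed an idea that the paper supplies: the obstacle you honestly identify is an unacknowledged gap in the paper's own argument for $\nL\ge 3$, and repairing it at the stated constant would require a sharper concentration step (e.g., bounding the moment generating function of a variable supported on $\{-v,0,v\}$ with $|v|\le\nL-1$ directly, or a Bernstein-type bound exploiting the small variance of the summands) rather than a two-point range.
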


\remark (i) Note that the above upper bound depends on the worker set $S$ and their reliabilities $\wi$ only through the term $F(S)$. 
In fact, according to the proof in the supplementary, the term $F(S)$ corresponds to the expected gap between the voting score of the true label $y_i$ (i.e. $\sum_{i\in\S}  \wflin(\hwi)\I{\zij=y_i}$) and that of the wrong labels, and hence reflects the confidence of the weighted majority voting. Therefore, $F(S)$ represents a score function for the worker set $S$:  if $F(S)$ is large, the weighted majority voting is more likely to give correct prediction. 

(ii) The assumption \eqref{model} used in Theorem~\ref{thm:erBound} implies a ``one-coin" model on the workers labels, where the labels are correct with probability $\wi$, and otherwise make mistakes uniformly among the remaining classes. 
This is a common assumption to make, especially in theoretical works (see e.g.,  \cite{Karger2011nips, ghosh2011moderates, Joglekar2013}). 
It is possible to relax \eqref{model} to a more general ``two-coin" model with arbitrary probability $\P(\zij=l|\yj=k)$, which, however, may lead more complex upper bounds. 
In our empirical study on various real-world datasets, we find that $F(S)$ remains to be an efficient score function for worker section even when the one-coin assumption does not seem to hold. 

Based on \eqref{def:F}, it is natural to select the workers by maximizing the term $F(S)$, that is, 
\begin{align}\label{opt:trueW}
\argmaxS \FS, 	\qquad\quad s.t. \quad |\S|\leq \Kmax,
\end{align}
Unfortunately, $F(S)$ depends on the workers' true reliabilities $\wi$, which is often unknown. 
We instead estimate $F(S)$ based on $\hwi$. The following theorem provides an unbiased estimator. 

\begin{lem}\label{thm:unbias}
Assume $\hwi$ is an unbiased estimator of $\wi$ that satisfies $\E[\hwi] = \wi$, and $\hat{\mathrm{var}}(\hwi)$ is an unbiased estimator of the variance of $\hwi$. Consider 
\begin{eqnarray}\label{def:FhatS}
\FhatS = { \inv{\sqrt{|\S|}} \sumiS \G(\hwi) }, 
\end{eqnarray}
where 
\begin{eqnarray}\label{def:Gacc} 
\G(\hwi) = (\nL\hwi-1)^2 -   \nL^2\hvar(\hwi),
\end{eqnarray}
then $\FhatS$ is an unbiased estimate of $\FS$.
\end{lem}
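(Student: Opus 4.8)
The plan is to reduce the statement to a single-worker bias calculation handled entirely by linearity of expectation. Since the factor $\inv{\sqrt{|\S|}}$ in both $\FS$ and $\FhatS$ is non-random, and $\FhatS = \inv{\sqrt{|\S|}}\sumiS \G(\hwi)$ while $\FS = \inv{\sqrt{|\S|}}\sumiS(\nL\wi-1)^2$, it suffices to show that $\E[\G(\hwi)] = (\nL\wi-1)^2$ for each fixed $i\in\S$; summing over $i$ and multiplying by $\inv{\sqrt{|\S|}}$ then gives $\E[\FhatS]=\FS$. I would stress that this reduction uses no independence whatsoever — neither among the workers' estimates, nor between $\hwi$ and its variance estimate $\hvar(\hwi)$ — since expectations are additive regardless.

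Next I would compute $\E[(\nL\hwi-1)^2]$ by expanding the square and using the two supplied facts: $\E[\hwi]=\wi$ for the linear term, and $\E[\hwi^2] = \mathrm{var}(\hwi) + \wi^2$ for the quadratic term. The routine computation yields
\begin{align*}
\E\big[(\nL\hwi-1)^2\big] \;=\; (\nL\wi-1)^2 \;+\; \nL^2\,\mathrm{var}(\hwi),
\end{align*}
so the naive plug-in $(\nL\hwi-1)^2$ overestimates the target by exactly $\nL^2\,\mathrm{var}(\hwi)$; this is precisely the quantity that the correction term in the definition \eqref{def:Gacc} of $\G$ is built to remove.

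Finally, subtracting and invoking the hypothesis that $\hvar(\hwi)$ is an unbiased estimator of $\mathrm{var}(\hwi)$,
\begin{align*}
\E[\G(\hwi)] \;=\; \E\big[(\nL\hwi-1)^2\big] \;-\; \nL^2\,\E[\hvar(\hwi)] \;=\; (\nL\wi-1)^2,
\end{align*}
and combining this with the first step completes the argument. I do not anticipate a real obstacle here: the whole thing is a one-line second-moment identity plus linearity of expectation. The only points I would take care to spell out are (a) that no independence assumptions are needed, and (b) that the hypotheses are actually satisfiable in the paper's setting — for instance, when $\hwi$ is the empirical accuracy over $n$ i.i.d.\ control questions we have $\mathrm{var}(\hwi)=\wi(1-\wi)/n$, and the standard estimator $\hvar(\hwi)=\hwi(1-\hwi)/(n-1)$ is unbiased for it, so the assumptions of the lemma genuinely hold.
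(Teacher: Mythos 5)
Your proposal is correct and follows essentially the same route as the paper's own proof: expand $\E[(\nL\hwi-1)^2]$ via the second-moment identity to exhibit the bias $\nL^2\,\mathrm{var}(\hwi)$ of the plug-in term, then cancel it using the unbiased variance estimator. The only cosmetic difference is that you work worker-by-worker while the paper carries the sum throughout; your added remarks on not needing independence and on satisfiability of the hypotheses are correct but not required.
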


\remark 
(i) The first term $(\nL\hwi-1)^2$ in \eqref{def:Gacc} shows that 
 the workers with $\hwi$ close to either 1 or 0 should be encouraged; these workers tend to answer the questions either all correctly or all wrongly, and hence are ``strongly informative" in terms of the predicting the true labels.  
Note that these workers with $\hwi=0$ are strongly informative in that they eliminate one possible value (their answer) for the true labels.  
On the other side, more workers also means more noise, so there is a term $\sqrt{|\S|}$ for balancing the signal-noise ratio --- to encourage hiring ``strong" workers instead of only hiring more workers. 

(ii) A simpler estimation of $\FS$ is to directly plug $\hwi$ as $\wi$ into \eqref{def:F}, that is, 
\begin{align}
\label{def:plugin}
\FSplug = \inv{\Lfactor \sqrt{|\S|}} \sum_{i\in\S} (\nL\hwi - 1)^2.
\end{align}
However, this obviously leads to a biased estimator of $\FS$ because of the missing of the variance term in \eqref{def:Gacc}.  
The existence of the variance term is of critical importance: The workers with large uncertainty on the reliabilities should be less favorable compared with these with a more confident estimation. 

Since Lemma \ref{thm:unbias} does not specify $\hwi$ and $\hvar(\hwi)$, the next theorem provides a concrete example of $\FhatS$, based on which a symmetric confidence interval of $\FS$ can be constructed.

\begin{thm}\label{thm:concreteFhatS}
Assume a group of workers are tested with $n$ control questions, and let $\ci$ be the number of correct answers given by worker $i$ on the $n$ control questions. Then an unbiased estimator $\hwi$, with an unbiased estimator of var($\hwi$) can be obtained by 
\begin{eqnarray}\label{def:hwi_hvar}
\hwi = \frac{\ci}{n} 
\connect
\hvar(\hwi)= \frac{\ci(n - \ci)}{n^2(n-1)}. 
\end{eqnarray}
With such $\hwi$ and $\hvar(\hwi)$, the corresponding $\FhatS$ in (\ref{def:FhatS}) is unbiased and the interval $[\FhatS-\alphaMargin, \FhatS+ \alphaMargin]$ covers $\FS$ with probability at least $1-2e^{-2\alpha^2}$ for any $\alpha>0$.
\end{thm}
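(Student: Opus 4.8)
The statement has two parts: the two unbiasedness claims, which reduce to elementary binomial moment computations together with Lemma~\ref{thm:unbias}, and the confidence interval, which I would obtain from Hoeffding's inequality. Since each of the $n$ control questions is answered correctly by worker $i$ independently with probability $\wi$, the count $\ci$ follows $\mathrm{Binomial}(n,\wi)$, and the $\ci$ are mutually independent across $i\in\S$. Unbiasedness of $\hwi=\ci/n$ is immediate from $\E[\ci]=n\wi$. For the variance estimator, I would note $\mathrm{var}(\hwi)=\wi(1-\wi)/n$ and compute $\E[\ci(n-\ci)] = n\E[\ci]-\E[\ci^2] = n^2\wi-\bigl(n\wi(1-\wi)+n^2\wi^2\bigr) = n(n-1)\wi(1-\wi)$, so that $\E[\hvar(\hwi)] = \wi(1-\wi)/n = \mathrm{var}(\hwi)$; this is just the standard unbiasedness of the Bernoulli sample variance. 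Feeding these two facts into Lemma~\ref{thm:unbias} gives at once that the resulting $\FhatS$ is unbiased for $\FS$.

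For the interval, the key observation is that $\FhatS = |\S|^{-1/2}\sum_{i\in\S}\G(\hwi)$ is a normalized sum of \emph{independent} random variables, one per worker, because $\G(\hwi)$ depends only on $\ci$. Each summand lies in a bounded range: substituting $\hwi=\ci/n$ and $\hvar(\hwi)=\ci(n-\ci)/(n^2(n-1))$ into \eqref{def:Gacc} and simplifying shows $\G(\hwi)$ equals a convex quadratic in $\ci$ on $\{0,1,\dots,n\}$, hence maximized at an endpoint and minimized at its vertex; the maximum is $(\nL-1)^2$, attained at $\ci=n$ where the variance correction vanishes, and I would then locate the minimum explicitly and write $\Delta$ for the oscillation (maximum minus minimum). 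Applying Hoeffding to the sum --- the factor $|\S|^{-1/2}$ exactly cancels the $|\S|$ that would otherwise appear in the exponent --- gives $\P\bigl(|\FhatS - \E\FhatS|\ge t\bigr)\le 2\exp(-2t^2/\Delta^2)$ for every $t>0$; taking $t=\alpha\Delta$ and using $\E\FhatS = \FS$ from the first part yields the stated coverage $1-2e^{-2\alpha^2}$ with half-width $\alphaMargin = \alpha\Delta$.

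I expect the only non-mechanical step to be pinning down the oscillation $\Delta$ of $\G$ precisely enough to match the paper's definition of $\alphaMargin$: one must check that the nonnegativity of $\hvar(\hwi)$ forces the maximum of $\G$ to be exactly $(\nL-1)^2$ rather than something larger, and then minimize the variance-corrected quadratic. A cleaner but coarser alternative is to bound $\Delta$ by $(\nL-1)^2$ directly; a sharper one is to apply McDiarmid's bounded-difference inequality to the $n|\S|$ individual correctness indicators, at the cost of more bookkeeping. I would use whichever normalization is consistent with the displayed form of $\alphaMargin$.
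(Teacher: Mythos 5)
Your proposal is correct and follows essentially the same route as the paper: unbiasedness of $\hvar(\hwi)$ via the binomial second moment fed into Lemma~\ref{thm:unbias}, then Hoeffding applied to the independent bounded per-worker summands with the $\sqrt{|\S|}$ normalization cancelling in the exponent. The only cosmetic difference is that the paper first completes the square to write $\G(\hwi)=\tfrac{n}{n-1}\bigl[(\nL\hwi-1-\tfrac{\nL-2}{2n})^2-\lambda\bigr]$ and bounds the squared term by its value at $\hwi=1$, which yields exactly the oscillation $\Delta=\tfrac{n}{n-1}(\nL-1-\tfrac{\nL-2}{2n})^2$ you describe computing from the endpoint maximum $(\nL-1)^2$ and the vertex minimum.
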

\remark A discussion about the advantage of the unbiasness of $\FhatS$ and the symmetric confidence interval is deferred to Section \ref{sec:discuss}.
 
Based on the estimation of $\FS$ in Theorem~\ref{thm:concreteFhatS}, 
the optimization problem is rewritten into 
\begin{eqnarray}\label{opt:Fhat}
\argmaxS \FhatS, 	\qquad\quad s.t. \quad |\S|\leq \Kmax, 
\end{eqnarray}
where $\FhatS$ is defined in \eqref{def:FhatS}.
Although this combinatorial problem is neither sub-modular nor super-modular, 
we show it can be exactly solved with a linearithmic time algorithm shown in Algorithm~\ref{alg:WSG}.

\begin{algorithm}[htb]
   \caption{\Workselect algorithm}
   \label{alg:WSG}
\begin{algorithmic}[1]
  \STATE {\bfseries Input:} Worker pool $\workset=\hua{1,2,\ldots, M}$ and estimated reliabilities $\hua{\hwi}_{i\in\workset}$ from $n$ control questions; Number of label classes $\nL$; Cardinality constraint: no more than $\Kmax$ workers per item. 
   \STATE  $\xxi \assign \G(\hwi)$, $\forall i\in\workset$ as in \eqref{def:Gacc}, and sort $\hua{\xxi}_{i\in \workset}$ in descending order so that $\xx_{\order(1)} \geq \xx_{\order(2)}\geq \ldots\geq \xx_{\order(M)}$, where $\order$ is a permutation of $\hua{1,2,\cdots, M}$.
   \STATE $B\assign \min(\Kmax, M)$, \quad $\g_1\assign \xx_{\order(1)}$ \quad and \quad $F_1\assign \g_1 $. 
   \FOR{ $k$ from 2 to $B$ }
   	  \STATE $\g_k\assign \g_{k-1} + \xx_{\order(k)}$  \connect 
   	   $\displaystyle F_k\assign \frac{\g_k}{\sqrt{k}}.$
   \ENDFOR
   \STATE $\displaystyle k^* \assign \min\hua{ \argmax_{1\leq k\leq B} F_k }.$
   
  \STATE {\bfseries Output:} The selected subset of workers $\Sstar\assign \hua{\order(1), \order(2), \cdots, \order(k^*)}$.
\end{algorithmic}
\end{algorithm}

Algorithm~\ref{alg:WSG} progresses by ranking the workers according to $\G(\hwi)$ in a decreasing order, and sequentially evaluates the groups of the top-ranked workers, and then finds the smallest group that has the maximal score $\FhatS$.  
The time complexity of Algorithm \ref{alg:WSG} is $O(|\workset|\log|\workset|)$ and the space complexity is $O(|\workset|)$, where $\workset$ is the whole set of workers. 

The following theorem shows that Algorithm~\ref{alg:WSG} achieves the global optimality of \eqref{opt:Fhat}. 
\begin{thm}\label{thm:optimality}
For any fixed  $\hwiall$. The set $\Sstar$ given by Algorithm~\ref{alg:WSG} is a global optimum of Problem~\eqref{opt:Fhat}, that is, we have $\hat F(\Sstar) \geq \hat F(S)  $ for $\forall S\in \workset$ that satisfies $|S| \leq K$. 
\end{thm}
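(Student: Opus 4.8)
The plan is to reduce the combinatorial problem \eqref{opt:Fhat} to a one-dimensional search over the cardinality $|S|$, exploiting the fact that $\hat F(\cdot)$ depends on $S$ only through $\sum_{i\in S}\G(\hwi)$ and $|S|$. First I would fix a cardinality $k\in\{1,\dots,B\}$ with $B=\min(\Kmax,M)$ and consider the slice $\{S:\ |S|=k\}$. On this slice the prefactor $1/\sqrt{k}$ is a positive constant, so $\hat F$ is maximized precisely when $\sum_{i\in S}\G(\hwi)$ is maximized, which happens when $S$ consists of the $k$ workers with the largest values $\xx_i=\G(\hwi)$, i.e. $S=\{\order(1),\dots,\order(k)\}$, because the sum of any $k$ of the $\xx_i$ is at most the sum of the $k$ largest. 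An easy induction on $k$ shows that the quantities computed in the loop of Algorithm~\ref{alg:WSG} satisfy $\g_k=\sum_{j=1}^{k}\xx_{\order(j)}$ and $F_k=\g_k/\sqrt{k}=\hat F(\{\order(1),\dots,\order(k)\})$, so for every feasible $S$ with $|S|=k$ we have $\hat F(S)\le F_k$, with equality attained by $\{\order(1),\dots,\order(k)\}$.

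Next I would take the maximum over the cardinality. Any admissible nonempty $S$ has $1\le|S|\le B$, so it lies in exactly one slice, and hence $\hat F(S)\le F_{|S|}\le \max_{1\le k\le B}F_k$. By construction $k^*\in\argmax_{1\le k\le B}F_k$ and $\Sstar=\{\order(1),\dots,\order(k^*)\}$ achieves $\hat F(\Sstar)=F_{k^*}=\max_{1\le k\le B}F_k$; moreover $|\Sstar|=k^*\le B\le\Kmax$, so $\Sstar$ is feasible. Combining these gives $\hat F(\Sstar)\ge\hat F(S)$ for every admissible $S$, which is the claim. The extra choice $k^*=\min\{\argmax_{1\le k\le B} F_k\}$ is not needed for optimality but gives the additional property that $\Sstar$ is the smallest optimal set, in line with the paper's aim of hiring as few workers as possible.

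The argument has no computational hurdle; the only point that deserves care --- and the reason the naive ``take the largest admissible set'' rule fails --- is that $\hat F$ is neither monotone in $|S|$ nor sub-/super-modular, so no exchange or greedy-submodular guarantee applies directly. The key observation that makes the problem tractable is that all of this non-monotonicity is carried by the factor $1/\sqrt{|S|}$, which depends on $S$ only through its size; conditioning on $|S|=k$ therefore linearizes the objective and makes the top-$k$ choice optimal, and there are only $B\le M$ values of $k$ to enumerate. A minor technical aside I would note is the degenerate empty set, on which $\hat F$ is undefined; it is excluded by starting the loop at $k=1$, which reflects the implicit assumption that at least one worker is hired per task.
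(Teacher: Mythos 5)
Your proof is correct and follows essentially the same route as the paper: slice the feasible region by cardinality $k$, observe that on each slice the objective reduces to maximizing $\sum_{i\in S}\G(\hwi)$ so the top-$k$ set is optimal, and then enumerate over $k\le\min(\Kmax,M)$, which is exactly what Algorithm~\ref{alg:WSG} does. The only cosmetic difference is that you justify per-slice optimality directly (the sum of any $k$ values is at most the sum of the $k$ largest), whereas the paper uses a contradiction-by-exchange argument; your version is, if anything, slightly cleaner.
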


\remark 
As a generalization, consider the following multiple-objective optimization problem, 
\begin{align*}
\argmaxS ( \FhatS, ~ - |S|), 	\qquad\quad s.t. \quad |\S|\leq \Kmax, 
\end{align*}
which simultaneously maximizes the score $\FhatS$ and minimizes the number $|S|$ of workers actually deployed.  
We can show that $\Sstar$ is in fact a Pareto optimal solution in the sense that there exist no other feasible $S$ that improves over $\FhatS$ in terms of both $\FhatS$ and $|S|$ (details in supplementary).

\section{Experimental results}\label{sec:expResults}
\def \wideR{0.48}

We demonstrate our algorithm using empirical experiments based on both simulated and real-world datasets. 
The empirical results confirm our intuition: 
Selecting a small number of top-ranked workers 
may perform as good as,  or even better than using all the available workers. 
In particular, we show that our worker selection algorithm significantly outperforms the naive procedure that uses all the top $K$ workers. We find that our algorithm tends to select a very small number of workers (less than $10$ in all our experiments), which is very close to the optimal number of the top-ranked workers in practice.

To be specific, we consider the following practical scenario in the experiments: 
(i) Assume there is a worker pool $\Omega$ where each worker has completed a ``qualify exam" with $n$ control questions, which is required by either the platform or a particular task owner.  
(ii) The task owner selects a subset of workers from $\Omega$ using a worker selection algorithm such as Algorithm~\ref{alg:WSG} based on their performance on the qualify exam.  
(iii) The selected workers are distributed to answer the $N$ questions of the main interest. (iv) Label aggregation algorithms such as WMV-linear or EM are applied to predict the final labels of these $N$ items. 

Even though our worker selection algorithm is derived when using WMV-linear, we can still use other label aggregation algorithms such as EM, once the worker set is selected. This gives the following possible combinations of the algorithms that we test:  
 WMV-linear on the top $\Kmax$ workers (\alg{WMV top \Kmax}),  WMV-linear on the worker set $\Sstar$ selected by Algorithm \ref{alg:WSG} (\alg{WMV-lin selected}), and WMV with log ratio weights on the selected worker set $\Sstar$ (\alg{WMV-log selected}), the EM algorithm on randomly selected $\Kmax$ workers (referred as \alg{EM random \Kmax}), EM on the top $\Kmax$ workers ranked (\alg{EM top \Kmax}) and EM on the worker set $\Sstar$ selected (\alg{EM selected}). We also implement the worker selection algorithm based on the plugin estimator in \eqref{def:plugin} (which is the same as Algorithm \ref{alg:WSG}, except replacing $\G(\hwi)$ with $(\nL\hwi-1)^2$), followed with a WMV-linear aggregation algorithm (referred as \alg{WMV-lin plugin}). 
Since the majority voting tends to perform much worse all the other algorithms, we omit it in the plots for clarity.  

In each trial of the algorithms on both the simulated and real-world datasets, 10 items are randomly picked from the collected data as the control items, and the workers' reliabilities $\hwiall$ are estimated based on the accuracy on the control items as \eqref{def:hwi_hvar}. 
In each trial, the number of workers selected by Algorithm \ref{alg:WSG} was stored and the average number of workers was computed for each budget $\Kmax$. 
We terminate all the iterative algorithms at a maximum of 100 iterations. All results are averaged over 100 random trials.

\subsection{Simulated data} 

\begin{figure}[!htb]
\begin{center}
\begin{tabular}{cc}
\includegraphics[width=\wideR\textwidth]{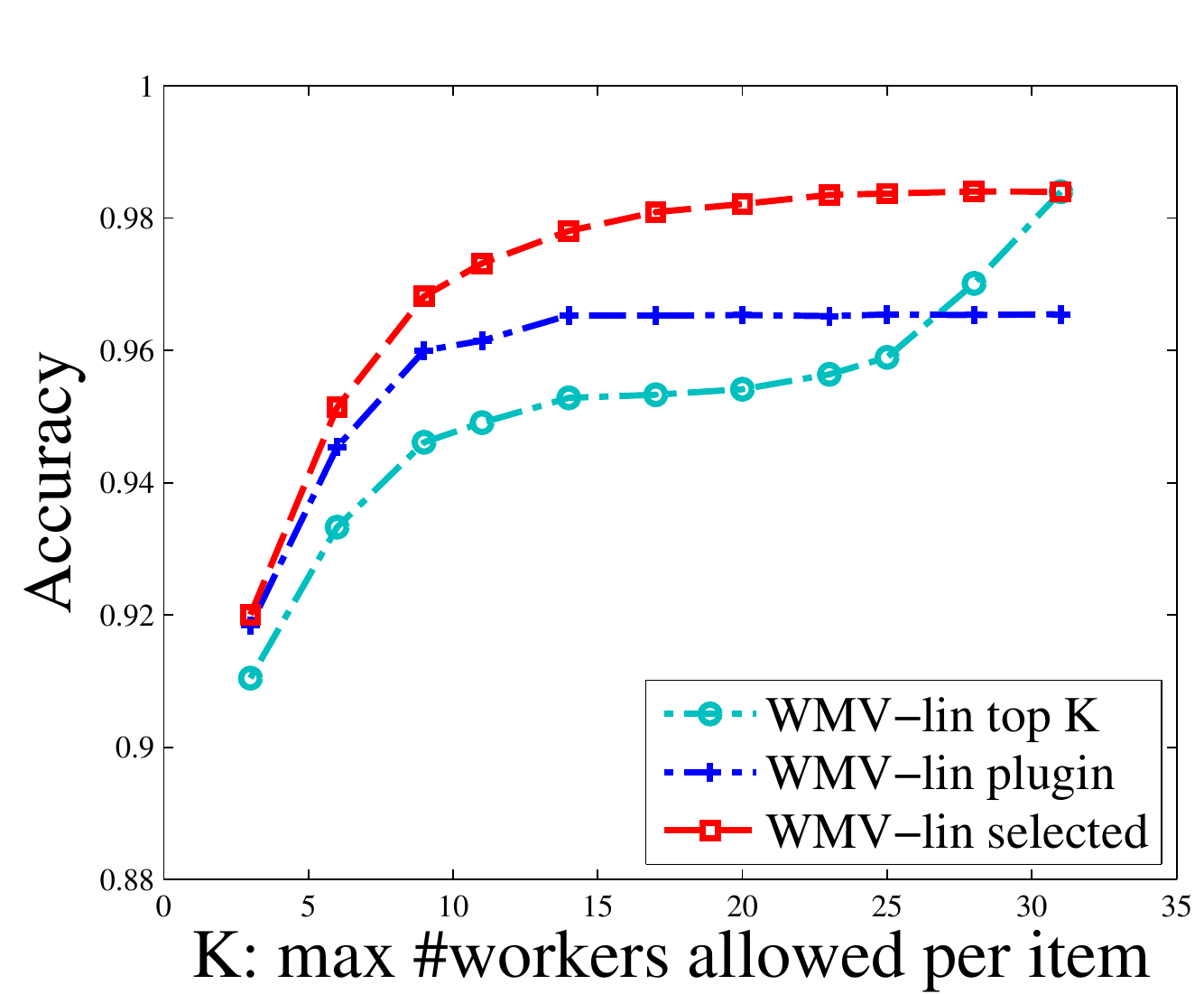}
&
\includegraphics[width=\wideR\textwidth]{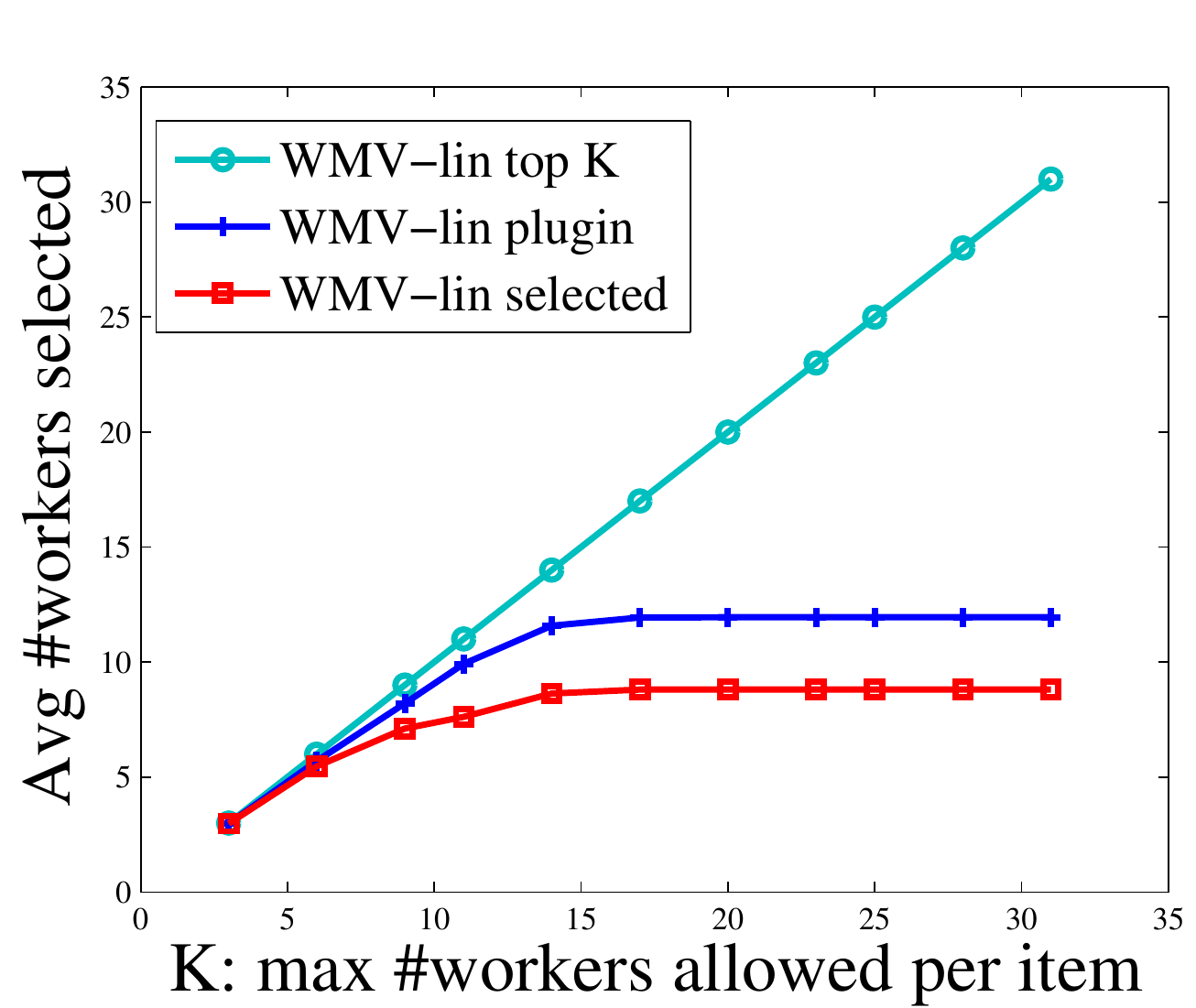}
\\
(a) & (b)
\end{tabular}
\end{center}
\caption{Performance of different worker selection methods on simulated data. WMV-linear aggregation is used in all the cases. We simulated 31 workers and 1000 items with binary labels, and use 10 control questions. The workers' reliabilities are drawn independently from $Beta(2.3, 2)$. (a)  The accuracies when the budget $\Kmax$ varies. 
(b) The actual number of workers used by different worker selection methods when $\Kmax$ increases. 
}
\label{fig:simu}
\end{figure}

We generate the simulated data by drawing 31 workers with reliability $\wi$ from $Beta(2.3, 2)$, and we randomly generated 1000 items with true labels uniformly distributed on $\hua{\pm 1}$. The budget $\Kmax$ varies from 3 to 31. 
Figure \ref{fig:simu}(a) shows the accuracy of  WMV-linear with different worker selection strategies as the budget $\Kmax$ changes. We can see that \alg{WMV-lin selected} dominates the other methods. 
Figure \ref{fig:simu}(b) shows the actual number of workers selected by worker selection algorithm (Algorithm\ref{alg:WSG}). 
WMV-linear based on our selected workers uses a relatively small number of (always $<10$) workers (the red curve in Figure~\ref{fig:simu}(b)), and achieve even better performance than 
\alg{WMV-lin top \Kmax} that uses the entire available budge (the blue line in Figure~\ref{fig:simu}(b)). 
We find that the worker selection algorithm based on the plugin estimator $\FSplug$ tends to select slightly more workers, but achieves slightly worse performance than Algorithm~\ref{alg:WSG} based on the bias-corrected estimator $\FhatS$ (see \alg{WMV-lin select} vs.  \alg{WMV-lin plugin} in Figure~\ref{fig:simu}(a)). This implies the importance of the variance term in (\ref{def:Gacc}), which penalizes the workers with noisy reliability estimation. 
 
\begin{figure*}[tb]
\begin{center}
\begin{tabular}{cc}
\includegraphics[width=\wideR\textwidth]{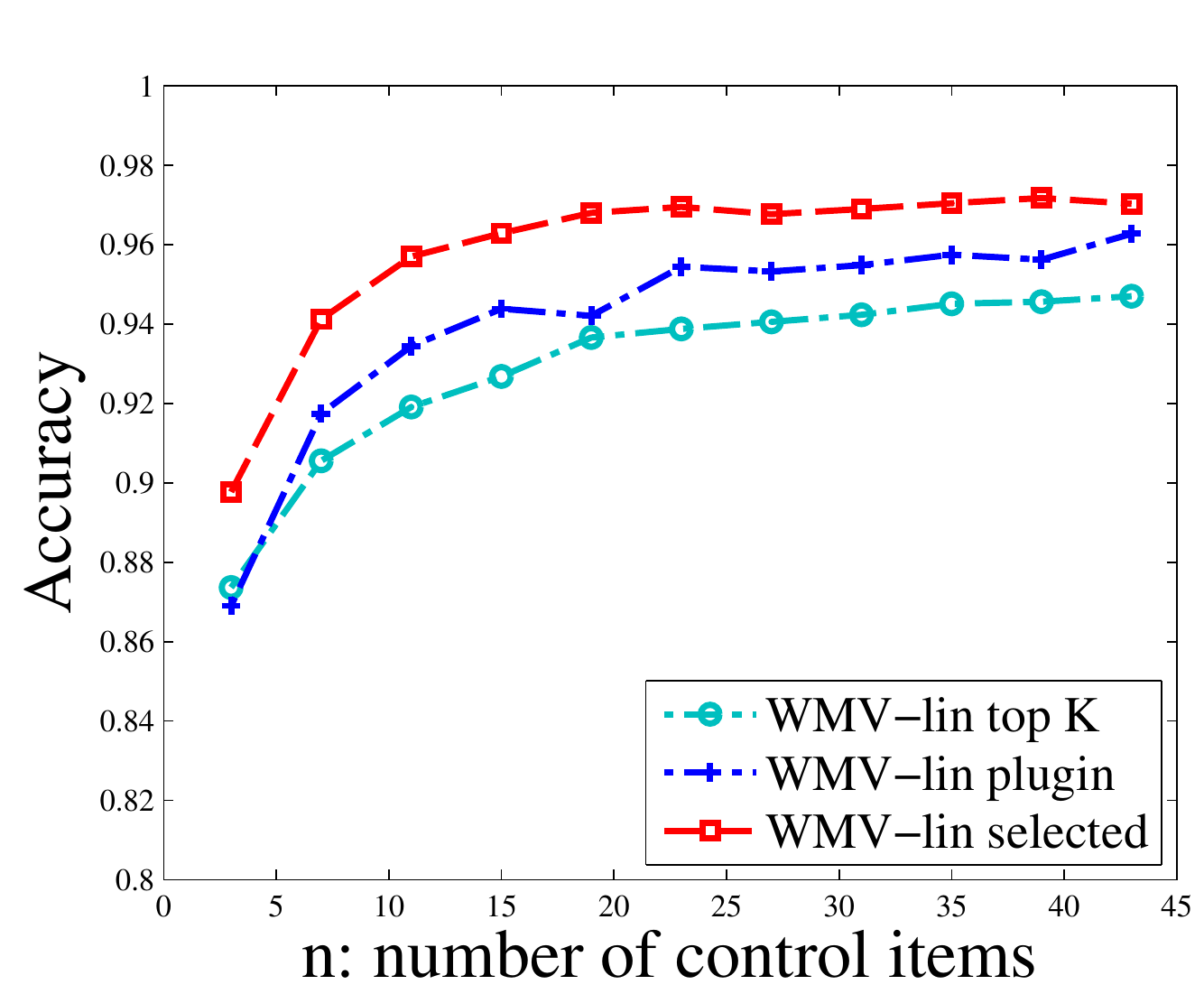}
&
\includegraphics[width=\wideR\textwidth]{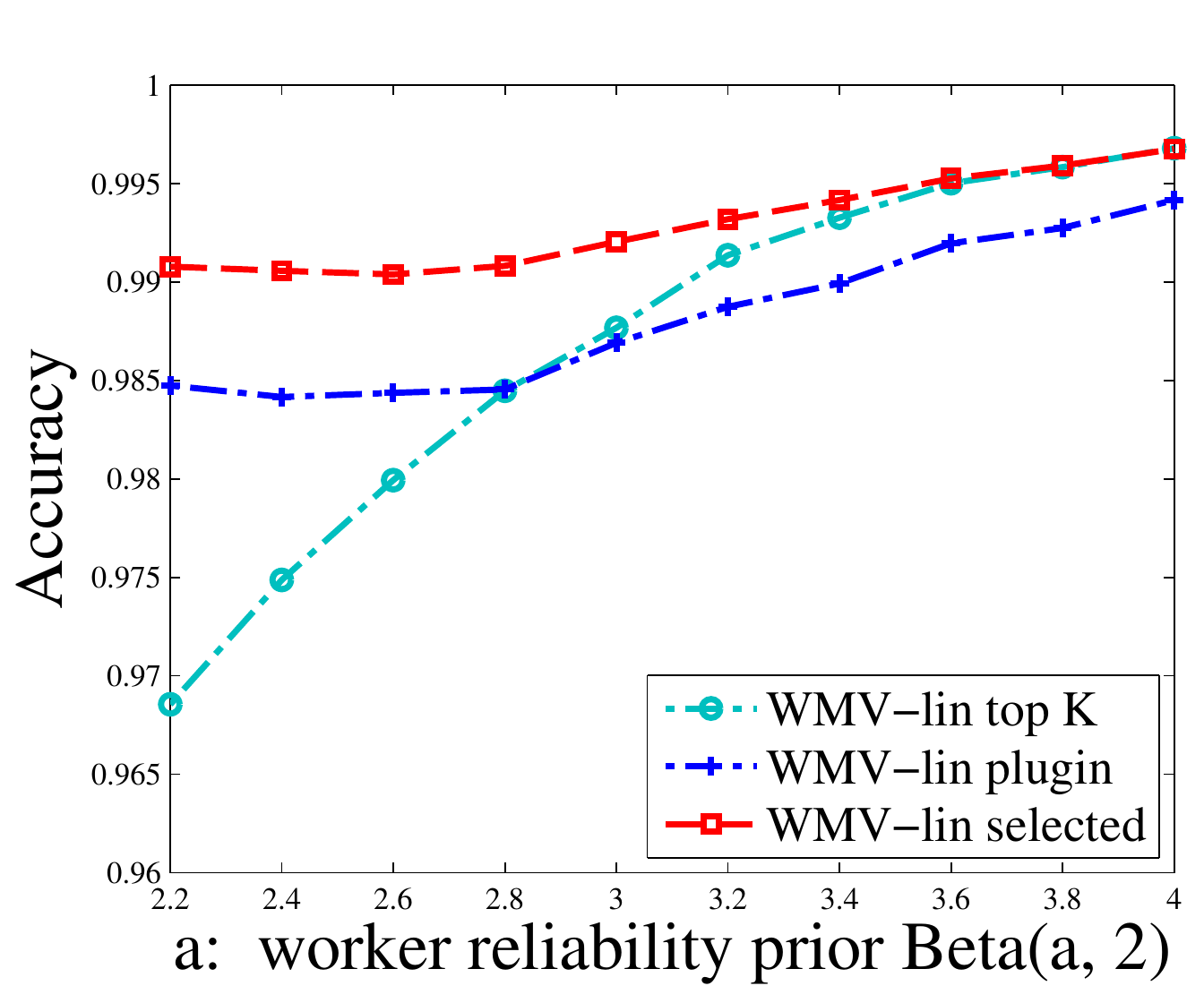}\\
(a) & (b)
\end{tabular}
\end{center}
\caption{Performance of different worker selection methods, (a) when changing the number of control questions $n$, and (b) when changing the parameter $a$ in the reliability prior $Beta(a, 2)$. The budget $\Kmax$ is fixed at 20. We use the WMV-linear aggregation method in all the cases. 
}
\label{fig:changeControl}
\end{figure*}

The number of control questions $n$ controls the variance of the reliability estimation $\hat{w}_i$, and hence influences the results of the worker selection algorithms. 
Figure \ref{fig:changeControl}(a) shows the results when we vary $n$ from 3 to 45, with the budget fixed at 
$\Kmax=20$. 
%
We see that the performance of all the algorithms increases when $n$ increases, because we know more accurate information about the workers' true reliabilities, and can make better decision on both choosing the top $\Kmax$ workers and selecting workers by Algorithm\ref{alg:WSG}. 
In addition, when $n$ increases,  the variance of $\hwi$ decreases and the difference between \alg{WMV-linear selected} and \alg{WMV-linear plugin} decreases.

Figure \ref{fig:changeControl}(b) shows the results when when we vary the prior parameter $a$ where $\wi\sim Beta(a,2)$, fixed $\Kmax=20$ and $n=10$. 
Larger $a$ means the workers are more likely to have high reliabilities (i.e., close to 1). We see from Figure \ref{fig:changeControl}(b) that \alg{WMV-lin top \Kmax} increases as $a$ increases, due to the overall improvement of the reliabilities of the top $\Kmax$ workers. The performances of \alg{WMV-lin selected} and \alg{WMV-lin plugin} improves only slightly, probably because they only select several top workers which is not heavily affected by $a$.

\subsection{Real data}\label{sec:realdata}
We test the different worker selection methods on three real-world datasets: two collected by ourselves from the crowdsourcing platform Clickworkers \footnote{\it http://www.clickworker.com/en}, and one by \citet{Welinder2010nips}
from Amazon Mechanical Turk. 

\emph{Crowd-test dataset:} In this dataset, 31 workers are asked to answer 75 knowledge-based questions from {\it allthetests.com,} which cover topics such as science, math, common knowledge, sports, geography, U.S. history and politics and India. All these questions have 4 options, and we know the all the ground truth beforehand. We required each worker to finish all the questions. 
A typical example of the knowledge-test question is as follows:\\
 \emph{\indent\hspace{1em}  (Question): In what year was the Internet created? \\
\indent\hspace{1em}   (Options): A. 1951; ~ B. 1969; ~C. 1985;~ D. 1993. }  

Figure \ref{fig:readData1}(a) shows the performance of the different methods as the budget $\Kmax$ changes. 
Since EM is widely used in practice, we include the results when using it as the label aggregation algorithm after the workers are selected. 
We find that the performance of \alg{EM Top \Kmax} first increases when $\Kmax$ is small and then decreases when $\Kmax$ is large enough ($\geq 10$ in this case). 
Our worker selection algorithm selects much smaller number of workers, while much better performance, compared to the \alg{top \Kmax} and \alg{random} selection methods.

\begin{figure}[!htb]
\begin{center}
\begin{tabular}{cc}
\includegraphics[width=\wideR\textwidth]{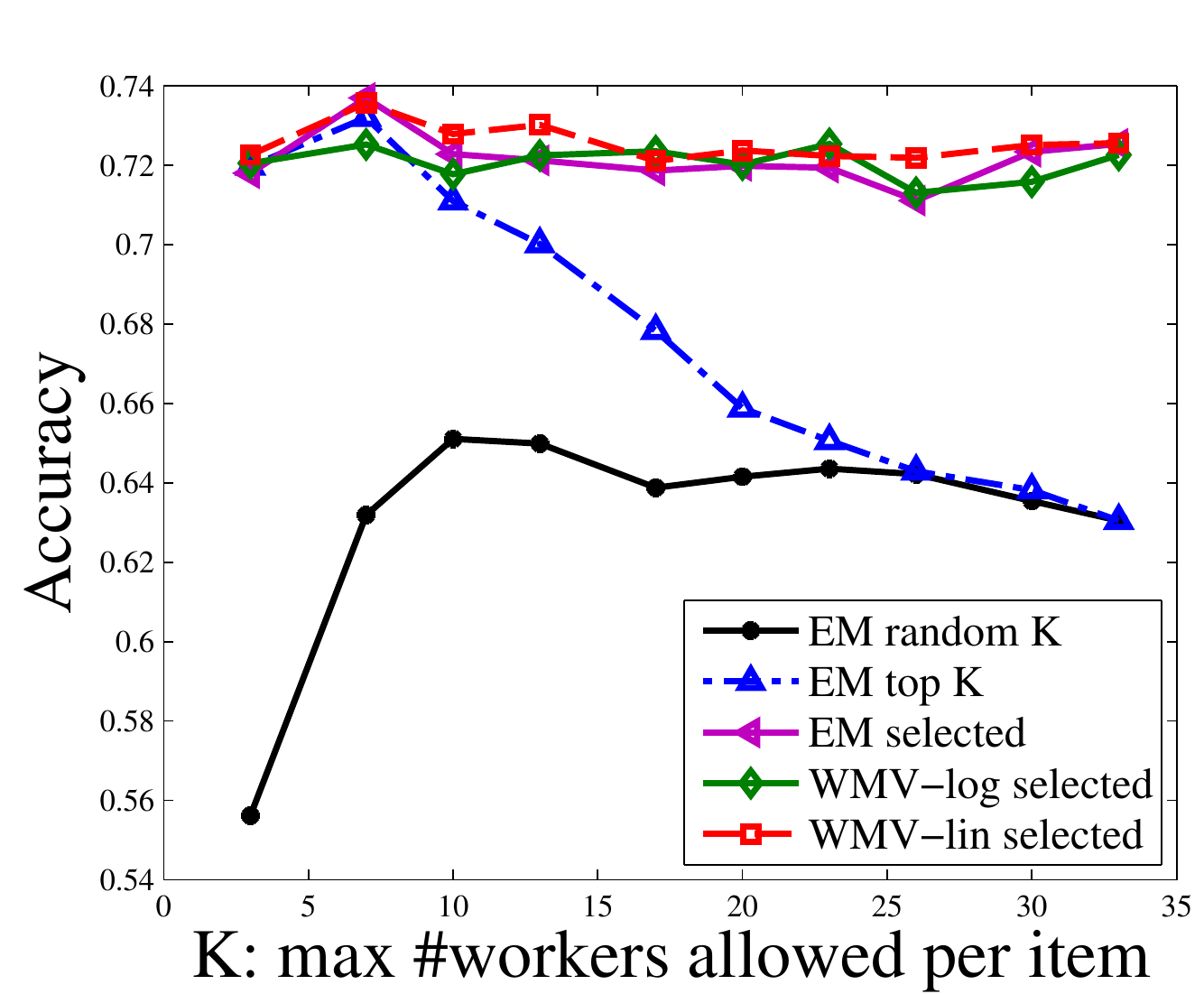}
&
\includegraphics[width=\wideR\textwidth]{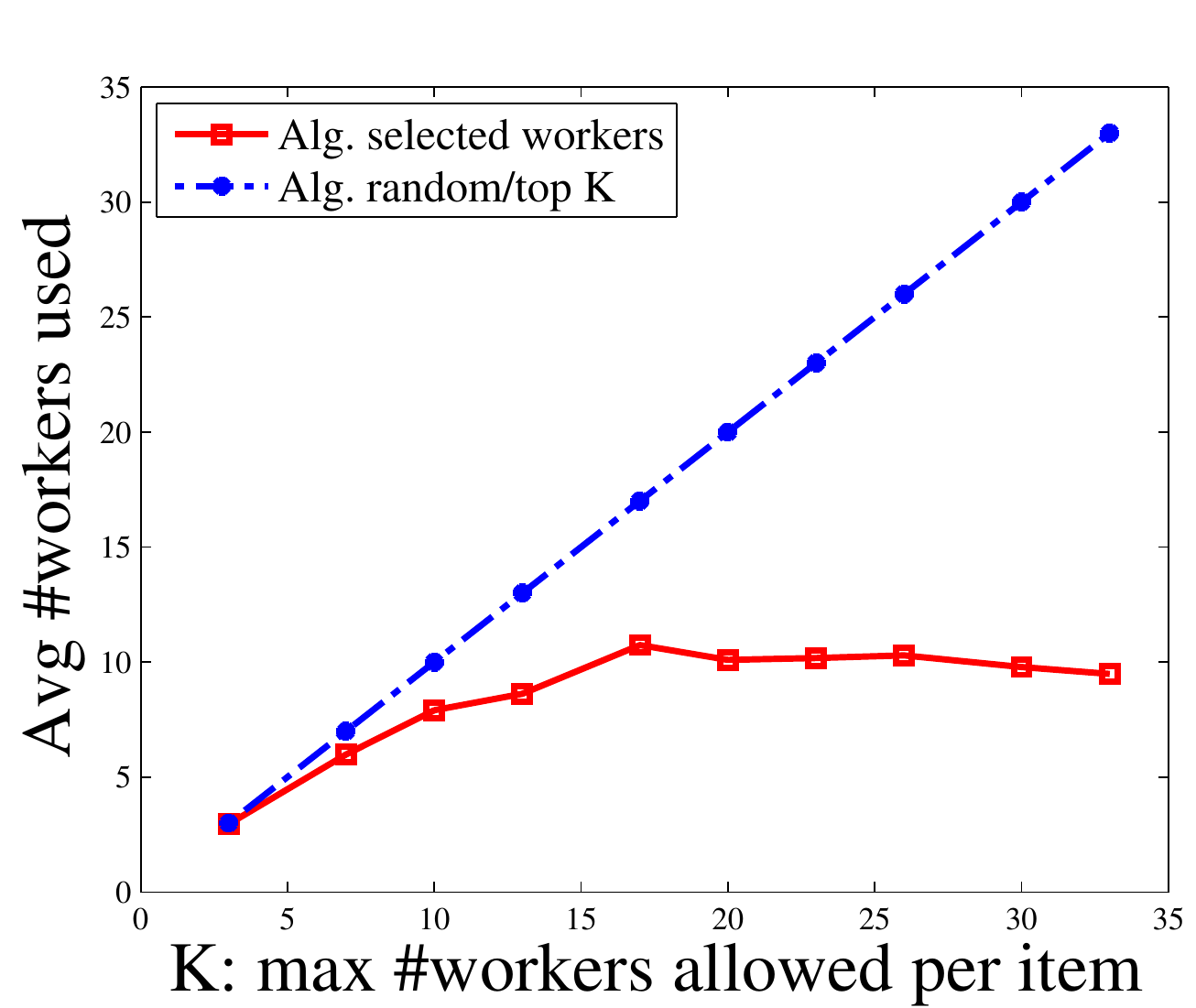}
\\
(a)& (b)
\end{tabular}
\caption{ Crowd-test data. 10 items were randomly selected as control. (a) Performance curve of algorithms with $\Kmax$ increasing. (b) Number of workers the algorithms actually used for each $\Kmax$. } 
\label{fig:readData1}
\end{center}
\end{figure}

\emph{Disambiguity dataset:} 
The task here is to identify which Wikipedia page (within 4 possible options) a given highlighted entity in a sentence actually refers to.  We collected 50 such questions in the technology domain with ground truth available, and hire 35 workers through Clickworkers, each of which is required to complete all the questions. 
A typical example is as follows: 

\emph{
(Question):  ``The Microsoft .Net Framework 4 redistributable package 
install the .NET Framework 
\textbf{\bf runtime} 
and associated files 
that are required to run and develop applications to target the .NET 
Framework 4".  Which Wiki page does ``runtime" refer to? \\
 (Options):
 \\
 A. http://en.wikipedia.org/wiki/Run-time\_system \\
B. http://en.wikipedia.org/wiki/Runtime\_library\\
C. http://en.wikipedia.org/wiki/Run\_time\_(program\\ \indent\hspace{3mm} \_lifecycle\_phase)\\
D. http://en.wikipedia.org/wiki/Run\_Time\_ \\ \indent\hspace{3mm} Infrastructure\_(simulation)
}

\emph{Bluebird dataset:} It is collected by \citet{Welinder2010nips} and is publicly available. In this dataset, 39 workers are asked if a presented image contains Indigo Bunting or Blue GroBeak. 
There are 108 images in total. 

\begin{figure}[!htb]
\begin{center}
\begin{tabular}{cc}
\includegraphics[width=\wideR\textwidth]{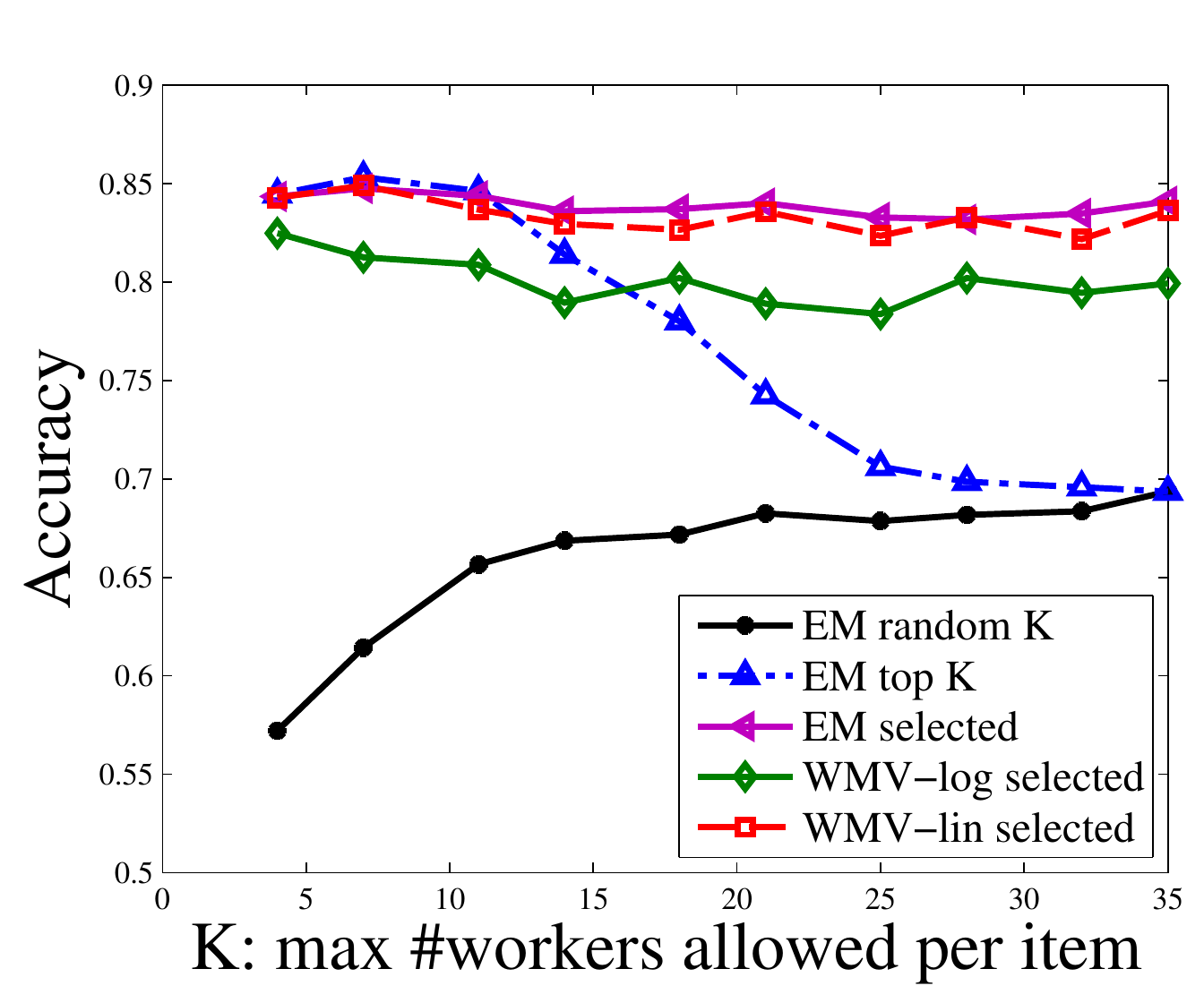}&
\includegraphics[width=\wideR\textwidth]{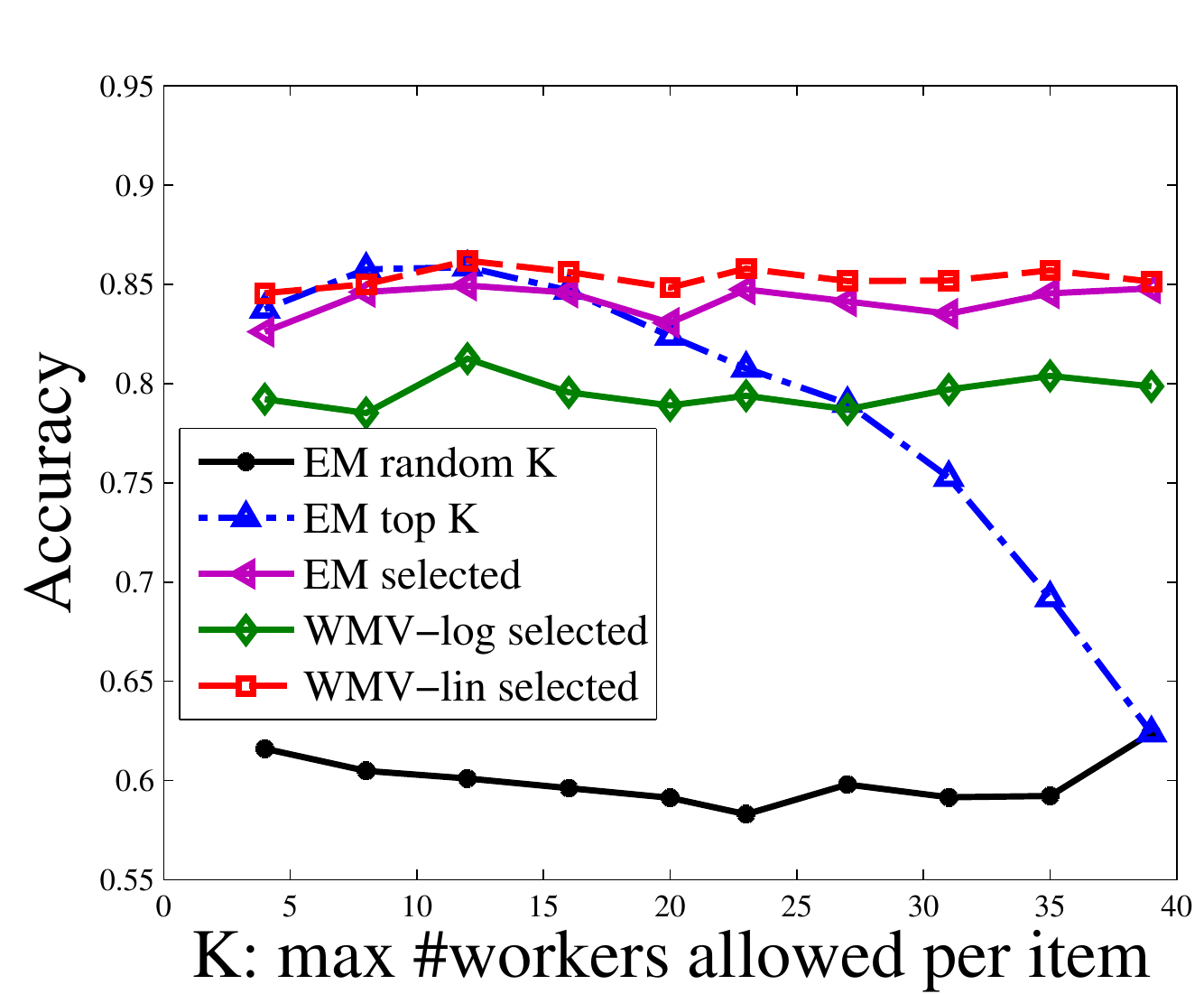}
\\
(a)& (b)
\end{tabular}
\caption{More performance comparison on real-world datasets. (a) The disambiguity dataset: 35 workers and 50 questions in total.  (b) The bluebird dataset: 39 workers and 108 questions in total. 
The settings are the same as that of Figure~\ref{fig:readData1}(a). 
The number of worker actually used (similar to Figure \ref{fig:readData1}(b)) are plotted in supplementary. 
} 
\label{fig:disamb_birds}
\end{center}
\end{figure}

Figure \ref{fig:disamb_birds} (a) and (b) show the performance of the different algorithms on the bluebird and the disambiguation dataset, respectively. The results are similar to the one in Figure \ref{fig:readData1}(a). For the disambiguation dataset, the number of workers selected is usually no more than 6, and the corresponding number for bluebird dataset is 9. 
See the supplementary for the the plots of the number of workers the algorithms actually used (similar to Figure \ref{fig:readData1}(b)) for each $\Kmax$ on these two datasets. 

Note that \alg{WMV-lin selected, WMV-log selected} and \alg{EM selected} are based on the workers selected  by Algorithm \ref{alg:WSG}. They achieve better performance than EM based on the top $\Kmax$ or the random selected workers when $\Kmax$ is large. This shows that aggregation based on inputs from selected workers not only saves budget but also maintains good performance.  



\section{Discussion}\label{sec:discuss}

\emph{What is the advantage of ensuring that $\FhatS$ is an unbiased estimate of $\FS$?} 
The true objective function $\FS$ is unknown, and we can only optimize over a random estimation $\FhatS$. 
If $\FhatS$ is a biased estimator and the bias depends on $\hwiall$, then the optimum solution may be very different from the underlying true solution.  
With the unbiased estimator and the symmetric confidence interval gurantee shown in Lemma~\ref{thm:unbias} and \ref{thm:concreteFhatS}, optimizing $\FhatS$ is equivalent to optimizing a proper confidence bound, because the margin in the confidence interval often does not depend on the workers' reliabilities. The results in Figure \ref{fig:simu} confirm that with the unbiased estimator $\FhatS$, the performance of WMV on the selected workers is better than that with the biased plugin estimator $\Fhat_{\mathrm{plug}}(S)$.

\emph{Why does WMV-linear perform better than WMV with $\wflog$?}
In some of our empirical results (e.g., Figure \ref{fig:disamb_birds}), we find that WMV with log ratio weight is not as good as the one with the linear weight. It is mainly because there is a high chance that some workers get estimated reliability $\hat{w}$ close to 0 or 1 when the number of control questions is small (e.g., $n=10$). Even if we do truncation to prevent a weight $\wflog(\hwi)$ from going to $\infty$, the large weights of some workers may still lead to unstable aggregations. 
However, the performance of WMV with $\wflog$ improves when we use larger $n$ or heavier truncation on $\hwi$. 


\emph{Why does EM with top-$\Kmax$ workers perform poorly as $\Kmax$ increases?} 
Within the given pool of workers, we add increasingly less reliable workers (compared with the workers already selected) as $\Kmax$ increases; 
these less reliable workers may confuse the EM algorithm, causing worse reliability estimation as well as final prediction accuracy.  
This intuition matches with our empirical results in Figure~\ref{fig:readData1} and~\ref{fig:disamb_birds}: 
the performance of EM generally first increases when $\Kmax$ is small (with increasingly more top-quality workers), but then decreases when $\Kmax$ is large (as more less reliable workers are added). 


\section{Conclusion}\label{sec:conclusion}
In this paper, we study the problem of selecting a set of crowd workers to achieve the best accuracy for crowdsourcing labeling tasks.  
We demonstrate that our worker selection algorithm can simultaneously minimize the number of selected workers and minimizing the prediction error rate, achieving the best in terms of both cost and efficiency. 
For future directions, we are interested in developing better selection algorithms based on more advanced label aggregation algorithms such as EM, or more complex probabilistic models.



\nocite{Bachrach_ICML12}
\nocite{Chen2013}
\nocite{Ho2013}
\nocite{Yan_icml11}
\nocite{Yan_icml10}
\nocite{Wauthier2011}

\bibliography{crowdsourcingBibTex}

\bibliographystyle{unsrtnat}

\newpage


\section*{\hspace{30mm}{\Large Supplementary Material }}

\begin{center}

\emph{\large ``Cheaper and Better: Selecting Good Workers for Crowdsourcing "}
\footnote{The equation numbers in this supplementary continue with the ones in the main paper.}

\end{center}

\section*{Proof of Theorem \ref{thm:erBound}: performance guarantee of WMV-linear}
\begin{proof}
Without loss of generality, we denote by $\pri$ the prevalence of true labels, i.e., $\P(\yj=k)=\pri_k, \forall j\in\N, k\in\Labset$,  where $\P$ denotes the probability measure. Note that even in the scenario that $\yj$ is assumed as fixed instead of random, our analysis and results will still hold with $\prik= \I{\yj=k}$. Furthermore, we assume the group of workers are $\S$ with $|\S|= M$. 

For WMV-linear, the weights $\hua{\hvi}_{i=1}^M$ are independent of the data matrix $\Z$. The associated weighted majority voting is 
$$
\hyj= \argmax_{k\in\Labset } \sumi \hvi\I{\zij=k},
$$ 
where $\hvi= \nL\hwi-1$ and $\E[\hwi] =\wi$. Thus, we have $\E\hvi= \vi= \nL\wi-1$ and $-1\leq \hvi \leq \nL-1$. 

Let 
\begin{eqnarray}\label{def:sjl}
\sjk \defas \sumi \hvi\I{\zij=k} , \quad \forall k\in \Labset, j\in\N
\end{eqnarray}
be the aggregated score of $j$th item that on potential label class $k$. Thus the general \predrule can be written as 
$
\hyj= \argmax_{k\in\Labset} \sjk . 
$

We will frequently discuss condition probability, expectation and variance conditioned on the event $\hua{\yj=k}$. Without introducing ambiguity in the context, we define:
\begin{eqnarray}
&& \Pk{~\cdot~} \defas \P(~\cdot~ | \yj=k) \\
&& \Ek{~\cdot~} \defas \E[~\cdot~ | \yj=k] 
\end{eqnarray}

Note that 
\begin{eqnarray}
\Ek{\sjl}=  \sumi \vi\kua{\wi\I{l=k}+ \kua{\frac{1-\wi}{\nL-1}}\I{l\neq k}}, ~~ \forall l,k \in \Labset. 
\end{eqnarray}

First of all, we expand the error probability of labeling the $j$-th item wrong in terms of the conditional probabilities:
\begin{eqnarray}\label{eqn:phyjDecomp}
\P(\hyj\neq \yj) \= \sumkinL \P(\yj=k)\P(\hyj\neq k | \yj=k) 
= \sumkinL \pri_k \Pk{\hyj\neq k}.
\end{eqnarray}

Our major focus in this proof is to bound the term $\Pk{\hyj\neq k}$. 
Our approach will be based on the fact of the following events relations:
\begin{equation}
\bigcup_\lneqk \hua{\sjl>\sjk} 
\quad\subseteq\quad 
\hua{\hyj\neq k} 
\quad\subseteq\quad 
\bigcup_\lneqk\hua{\sjl\geq \sjk} . 
\end{equation}

We want to provide an upper bound for $\P(\hyj\neq \yj)$.
Note that
\begin{eqnarray}
\Pk{\hyj\neq k} &\leq& \Pk{\bigcup_\lneqk \hua{\sjl\geq \sjk}} 
\leq \sum_\lneqk \Pk{\sjl\geq \sjk}.  \label{ineq:sum_prob_sl_geq_sk}
\end{eqnarray}

With $\sjl$ defined as in (\ref{def:sjl}), and when $l\neq k$, we define
\begin{eqnarray}
&& \xikl \defas \sjl -\sjk= \hvi\kua{\I{\zij=l} - \I{\zij=k}}, \\
&& \Ek{\xikl} =  \E\hvi\cdot\kua{\frac{1-\nL\wi}{\nL-1}}= - \inv{\nL-1}(\nL\wi-1)^2, \\
&& \gapjkl \defas \sumi \Ek{\sjk -\sjl}= -\sumi \Ek{\xikl} = \frac{1}{\nL-1}\sumi(\nL\wi-1)^2.
\end{eqnarray}

We have
\begin{eqnarray}
\Pk{\sjl\geq \sjk} \= \Pk{\sumi\hvi\kua{\I{\zij=l} - \I{\zij=k}} \geq 0} \nonumber \\
\= \Pk{\sumi \xikl - \sumi \Ek{\xikl} \geq - \sumi \Ek{\xikl}}, \quad 
\nonumber \\
\= \Pk{\sumi \xikl - \sumi \Ek{\xikl} \geq \gapjkl} \label{ineq:lowerBoundHub}
\end{eqnarray}

Note that $\hua{\xikl}_{i\in\M}$ are conditional independent when given $\hua{\yj=k}$, and they are bounded given the voting weight $\hua{\hvi}_{i\in\M}$ are bounded. Therefore, we could apply Hoeffding concentration inequality 
to further bound $\Pk{\sjl\geq\sjk}$. 

Apparently, 
$
-1\leq \xikl 
\leq (\nL-1).
$
Note $\gapjkl\geq 0$, by appling Hoeffding inequality to (\ref{ineq:lowerBoundHub}), 
\begin{eqnarray*}
\Pk{\sjl\geq \sjk} &\leq& \Pk{\sumi \xikl - \sumi \Ek{\xikl} \geq \gapjkl} 
\\
&\leq& \exp\kua{-\frac{2\gapjkl^2}{\sumi \braket{(\nL-1)-(-1)}^2 }} 
\\
&\leq& \exp\kua{- \frac{2 \gapjkl^2}{\kua{\nL\sqrt{M}}^2}} \qquad\qquad 
\\
&\leq & e^{-2\tsame^2},
\end{eqnarray*}
where 
$
\tsame= \frac{1}{\nL(\nL-1)\sqrt{M}}\sumi (\nL\wi-1)^2.
$

The right hand side of last ineiquality does not depend on $k, l$ or $i$, straightforwardly, 
\begin{eqnarray}
\Pk{\hyj\neq k} &\leq& \sum_\lneqk \Pk{\sjl\geq \sjk} 
\quad\leq\quad  (\nL-1)e^{-2\tsame^2}.
\end{eqnarray}
Furthermore, we have
\begin{eqnarray}
\P(\hyj\neq \yj) \= \sum_{k\in\Labset}\pri_k\Pk{\hyj\neq k} \nn\\
&\leq& (\nL-1)e^{-2\tsame^2}\kua{\sum_{k\in\Labset} \pri_k} \nn\\
\= e^{-2\tsame^2+\ln(\nL-1)} 
\label{ineq:Hoeffding1}
\end{eqnarray}

The bound $e^{-2\tsame^2+\ln(\nL-1)} $  does not depend on $j$, thus it is also a valid bound for the mean error rate.  That is to say
\begin{eqnarray*}
\MER &\leq& \inv{N}\sumj e^{-2\tsame^2+\ln(\nL-1)}  ~~=~~ e^{-2\tsame^2+\ln(\nL-1)} .
\end{eqnarray*}
Note that $\tsame= \frac{\FS}{\nL(\nL-1)}$, thus we have proved the desired result.

\end{proof}

\section*{Proof of Lemma \ref{thm:unbias}: unbiasness of $\FhatS$}

\begin{proof}
Assume $|\S|=k$. Let $\FbS$ be the pluggin estimator for $\FS$, i.e., 
$$
\FbS = \inv{\Lfactor \sqrt{k}}\sumiS (\nL\hwi-1)^2.
$$
First we show that $\FbS$ is a biased estimate of $\FS$. 
\begin{eqnarray*}
\E[\FbS] \= \inv{\Lfactor\sqrt{k}}\sumiS \kua{ \nL^2\E[\hwi^2] - 2\nL\E[\hwi] + 1} \\
\=\inv{\Lfactor\sqrt{k}}\sumiS \kua{\nL^2\kua{ \vari + \wi^2} - 2\nL\wi +1 } \\
\=\inv{\Lfactor\sqrt{k}}\sumiS \kua{ (\nL\wi-1)^2 + \nL^2\vari } \\
\= \FS + \inv{\Lfactor\sqrt{k}}\sumiS \nL^2\vari.
\end{eqnarray*}
Note that $\E\hwi=\wi$ and $\E[\hvari]=\vari$, thus we can move terms around to construct an unbaised estimate of $\FS$ based on $\FbS$:
\begin{eqnarray*}
\FS = \E\braket{ { \FbS - \inv{\Lfactor\sqrt{k}}\sumiS \nL^2\hvari} } , 
\end{eqnarray*}
which leads to a unbaised estimate of $\FS$ as follows.
\begin{eqnarray*}
\FhatS \=  \FbS - \inv{\Lfactor\sqrt{k}}\sumiS \nL^2 \hvari \\
\= \frac{1}{\Lfactor\sqrt{k}} \sumiS  \kua{ (\nL\hwi-1)^2 - \nL^2\hvari },
\end{eqnarray*}
which is the same form as (\ref{def:FhatS}) and $\E[\FhatS]= \FS$.

\end{proof}

\section*{Proof of Theorem \ref{thm:concreteFhatS}: symmetric confidence interval}

\begin{proof}
Similar to the proof of Lemma \ref{thm:unbias}, we assume $|\S|=k$. 
With $\hwi$ and $\hvari$ defined as in (\ref{def:hwi_hvar}), it is straightforwardly to show that $\E\hwi=\wi$ and $\E[\hvari]= \frac{\wi(1-\wi)}{n} = \vari$, 
then by Lemma \ref{thm:unbias} the corresponding unbaised estimator of $\FS$ is 
\begin{eqnarray*}
\FhatS \= \inv{\sqrt{k}}\sumiS \braket{(\nL\hwi-1)^2 - \frac{\nL^2\hwi(1-\hwi)}{n-1} } \\
\= \factLnk \sumiS \braket{ \squareTerm -\lamLn} ,
\end{eqnarray*}
therefore $\G$ has two equivalent forms:
\begin{eqnarray}
&& \G= (\nL\hwi-1)^2 - \frac{\nL^2\hwi(1-\hwi)}{n-1} \label{supp:G_intuitive}\\
\text{~~and\qquad } &&\G= \frac{n}{n-1} \braket{ \squareTerm - \lamLn } \label{supp:G_long}
\end{eqnarray}

Next, we prove the confidence interval of $\FS$ based on the form (\ref{supp:G_long}) of $\G$. We define random variables $\hua{\Xi}_{i\in\S}$ as
$$
\Xi \defas \squareTerm - \lambda ,
$$
where $\lambda= \lamLn$. Then 
$$
\FhatS= \factLnk\sumiS \Xi.
$$
Note that $\hua{\Xi}_{i\in\S}$ are a collection of indepdent random variables,  $ -\lambda\leq \Xi \leq \kua{\nL-1-\frac{\nL-2}{2n}}^2 - \lambda$, and $\E\FhatS= \FS$. We can apply Hoeffding Inequality to bound the following probability,
\begin{eqnarray}
&& \P\kua{\abs{\FhatS - \FS} \leq \factLnk \cdot \beta} \nn\\
\= \P\kua{\factLnk \abs{ \sumiS\Xi - \E\braket{\sumiS \Xi}} \leq \factLnk\cdot \beta} \nn\\
\= \P\kua{ \abs{ \sumiS (\Xi- \E\Xi) }\leq \beta} \nn\\
&\geq& 1-2\exp\kua{- \frac{2\beta^2}{k(\nL-1-\frac{\nL-2}{2n})^4}} \nn\\
\= 1-2e^{-2\alpha^2},
\end{eqnarray}
where $\alpha= \frac{\beta}{(\nL-1-\frac{\nL-2}{2n})^2 \sqrt{k}} $ and the inequality is due to Hoffding bound. Meanwhile,
\begin{eqnarray}
&& \P\kua{\abs{\FhatS - \FS} \leq \factLnk \cdot \beta} \nn \\
 \= \P\kua{\abs{\FhatS - \FS} \leq \frac{n\kua{\nL-1-\frac{\nL-2}{2n}}^2}{(n-1)\Lfactor}\alpha } \nn \\
&\leq& \P\kua{\abs{\FhatS - \FS} \leq \alphaMargin },
\end{eqnarray}
which implies that $[\FhatS-\alphaMargin, \FhatS+\alphaMargin]$ covers $\FS$ with probability at least $1-2e^{-2\alpha^2}$.

\end{proof}

\section*{Proof of Theorem \ref{thm:optimality}: the global optimality of \workselect algorithm }

\begin{proof}

Let $\xxi= (\nL\hwi-1)^2- \nL^2\hvari$,
 then the optimization problem (\ref{opt:Fhat}) can be written as
\begin{eqnarray}\label{opt:x}
\argmaxS \FhatS \qquad s.t.\quad |\S|\leq \Kmax
\end{eqnarray}
where 
$$
\Gf(\S)= \inv{\sqrt{|S|}}\sumiS \xxi.
$$

Note that $\hwiAll$ are given in these optimization problems, thus we do not treat $\xxi$ as random. The problems (\ref{opt:Fhat}) (i.e., (\ref{opt:x})) are deterministic combinatorial problems. In this proof, we show that the output from Algorithm \ref{alg:WSG} achives the global maximum of problem (\ref{opt:x}). 

The worker selection problem is to select a worker set denote by $\Sstar$ such that $\Gf(\Sstar)\geq \Gf(\S)$ for any set of workers $\S\subseteq \workset$. 
Let $\order$ be a permutation of $\workset=\hua{1,2,\cdots, M}$ such that $\xxOrder{1} \geq \xxOrder{2}\geq \ldots\geq \xxOrder{M}$. 

We want to show that given any globally optimal solution of problem (\ref{opt:x}) $\Sstar$, which has cardinality $|\Sstar|= \kstar$, we have $\Gf(\Sstar)=\Gf(\hua{\order(1), \order(2), \cdots, \order(\kstar)})$. 

To see this, let $\S'= \hua{\order(1), \order(2), \cdots, \order(\kstar)}$, and we assume $\Gf(\Sstar) > \Gf(\S')$. Since the value of function $\Gf(\S)$ only depdends on cardinality of $\S$ and $\hua{\xxi}_{i\in\S}$, the configuration
\footnote{Here, we use \emph{configuration} to denote sets that allow duplicates of values such as $\hua{1,1,1, 2, 3, 3}$.}
of values $\hua{\xxi}_{i\in\Sstar}$ is not equal to $\hua{\xxi}_{i\in\S'}$. This further implies that there exist $i\in\Sstar\backslash\S'$ and $j\in\S'\backslash \Sstar$ such that $\xx_i \neq \xx_j$. Since $i\notin\S'$ and $\S'$ is the top $\kstar$ $x$-values, then $\xx_i < \xx_j$. 
Therefore, if we replace $i$ in $\Sstar$ with $j$ will increase the value of $\Gf$, i.e., $\Gf( (\Sstar\backslash\hua{i}) \cup \hua{j}) > \Gf(\Sstar)$. This contradicts with the fact that $\Gf(\Sstar)$ is global optimum. 
Thus we conclude that $\Gf(\Sstar)=\Gf(\hua{\order(1), \order(2), \cdots, \order(\kstar)})$. 

The analysis above implies that if we know the cardinality of the global optimal solution $\kstar$, then the top $\kstar$ workers in terms of $x$-values will be the global optimum in problem (\ref{opt:x}), although it might not be the unique one. Based on the fact that the cardinality of $\Sstar$ has to be one of the values in $\hua{1, 2, \cdots, \min(\Kmax, M)}$, we can compute the value of $\Gf(\hua{\order(1), \order(2), \cdots, \order(k)})$ with $k$ from 1 to $\min(\Kmax, M)$.  Then the maximum of the yielded $\Gf$ function values has to be a global optimum of problem (\ref{opt:x}), and thus the corresponding worker set is global optimum of problem (\ref{opt:Fhat}). Algorithm \ref{alg:WSG} follows exactly the procedure described above, therefore it output a globally optimal worker set.

\end{proof}

As mentioned in the remark of Theorem \ref{thm:optimality}, we can show that $S^*$ also solves the following multi-objective optimization problem that simultaneously maximizes the score $\FhatS$ and minimizes the number $|S|$ of workers actually deployed.  
\begin{thm}
Consider a multiple-objective optimization problem, 
\begin{align*}
\argmaxS ( \FhatS, ~ - |S|), 	\qquad\quad s.t. \quad |\S|\leq \Kmax, 
\end{align*}
then $\Sstar$ is its Pareto optimal solution. 
\end{thm}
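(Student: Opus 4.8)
The plan is to reduce Pareto optimality of $\Sstar$ to two facts already in hand. First, by Theorem~\ref{thm:optimality}, $\Sstar$ is a global optimum of the single-objective problem \eqref{opt:Fhat}, so $\hat F(\Sstar) = F^\star \defas \max_{|S|\le\Kmax}\hat F(S)$. Second, $\hat F(\Sstar) = F_{k^*} = \max_{1\le k\le B}F_k$ by the construction of Algorithm~\ref{alg:WSG}, and $|\Sstar| = k^*$ is the \emph{smallest} index $k$ at which $F_k$ attains this maximum; combining with the first fact, $\max_{1\le k\le B}F_k = F^\star$. Recall that a feasible $S'$ \emph{dominates} $\Sstar$ if $\hat F(S')\ge \hat F(\Sstar)$ and $|S'|\le|\Sstar|$ with at least one of these inequalities strict; it suffices to show no such $S'$ exists, for then $\Sstar$ is Pareto optimal.

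So suppose, for contradiction, that $S'$ is feasible and dominates $\Sstar$; in particular $|S'|\le k^*$. Since $S'$ is feasible, $\hat F(S')\le F^\star = \hat F(\Sstar)$, so the first coordinate cannot be strictly improved; this forces $\hat F(S') = F^\star$, and hence $|S'| < k^*$ (so that the second coordinate is the strictly improved one). Write $k' \defas |S'|$, so $k' < k^*$. Because $\max_{|S|=k'}\hat F(S)\le \max_{|S|\le\Kmax}\hat F(S) = F^\star = \hat F(S')$ and $|S'| = k'$, the set $S'$ attains the maximum of $\hat F$ over all sets of cardinality exactly $k'$.

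Finally I would invoke the exchange argument from the proof of Theorem~\ref{thm:optimality}, now applied at cardinality $k'$ in place of $k^*$: any cardinality-$k'$ maximizer of $\hat F$ has the same $\hat F$-value as the top-$k'$ group $\hua{\order(1),\dots,\order(k')}$, so $F_{k'} = \hat F(S') = F^\star$. But then $k' < k^*$ is an index at which $F_{k'} = \max_{1\le k\le B}F_k$, contradicting $k^* = \min\hua{\argmax_{1\le k\le B}F_k}$. Hence no feasible $S'$ dominates $\Sstar$, so $\Sstar$ is Pareto optimal. The one step that needs genuine care is this last invocation --- passing from ``$S'$ is a cardinality-$k'$ maximizer of $\hat F$'' to ``$F_{k'} = F^\star$'' --- but that is exactly the swap argument already carried out, for an arbitrary optimal cardinality, inside the proof of Theorem~\ref{thm:optimality}, so it can simply be cited at cardinality $k'$; everything else is elementary bookkeeping.
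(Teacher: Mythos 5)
Your proof is correct, but it is genuinely more careful than the paper's own argument, and the difference is worth noting. The paper's proof is a one-liner: since Theorem~\ref{thm:optimality} guarantees no feasible $S$ has $\hat F(S) > \hat F(\Sstar)$, no feasible point can improve both objectives, and the paper stops there. That settles only the \emph{weak} form of Pareto optimality (no point strictly better in every coordinate), which matches the phrasing in the paper's remark (``improves \ldots in terms of both''), but it leaves open the case of a competitor $S'$ with $\hat F(S') = \hat F(\Sstar)$ and $|S'| < |\Sstar|$ --- exactly the case that matters if one uses the standard strict-domination definition you adopt. You close that gap: you show any such $S'$ would be a global maximizer of cardinality $k' < k^*$, invoke the exchange/swap argument from the proof of Theorem~\ref{thm:optimality} (which is indeed stated there for an arbitrary optimal cardinality, so citing it at $k'$ is legitimate) to conclude $F_{k'} = \max_k F_k$, and then contradict the minimality of $k^*$ enforced by the $\min$ in Step~7 of Algorithm~\ref{alg:WSG}. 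So your argument buys the strictly stronger conclusion and, unlike the paper's, actually uses the fact that the algorithm returns the \emph{smallest} maximizing cardinality --- which is the whole point of the $-|S|$ objective. The only stylistic caveat is that the exchange argument in the paper is written for the specific set $\Sstar$; a fully self-contained write-up would restate it as a lemma parameterized by cardinality before citing it at $k'$, but the logic transfers verbatim.
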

\begin{proof}
By Theorem \ref{thm:optimality}, suppose $\Sstar$ is the global optimum of problem \eqref{opt:Fhat} and $|\Sstar|\leq \Kmax$, then there is no other set $\S$ such that $\S\leq \Kmax$ and $\FhatS > \hat F(\Sstar)$. This implies that within the sets with cardinality no more than $\Kmax$, there is no other set could improve $\FhatS$. Thus $\Sstar$ is \emph{Pareto optimal}\footnote{\tt http://en.wikipedia.org/wiki/Multi-objective\_optimization} according to its definition in the context of multiple objective optimization. 
\end{proof}

\begin{figure}[!htb]
\renewcommand{\figurename}{Figure A.\!\!}
\begin{center}
\begin{tabular}{cc}
\includegraphics[width= 0.5\textwidth]{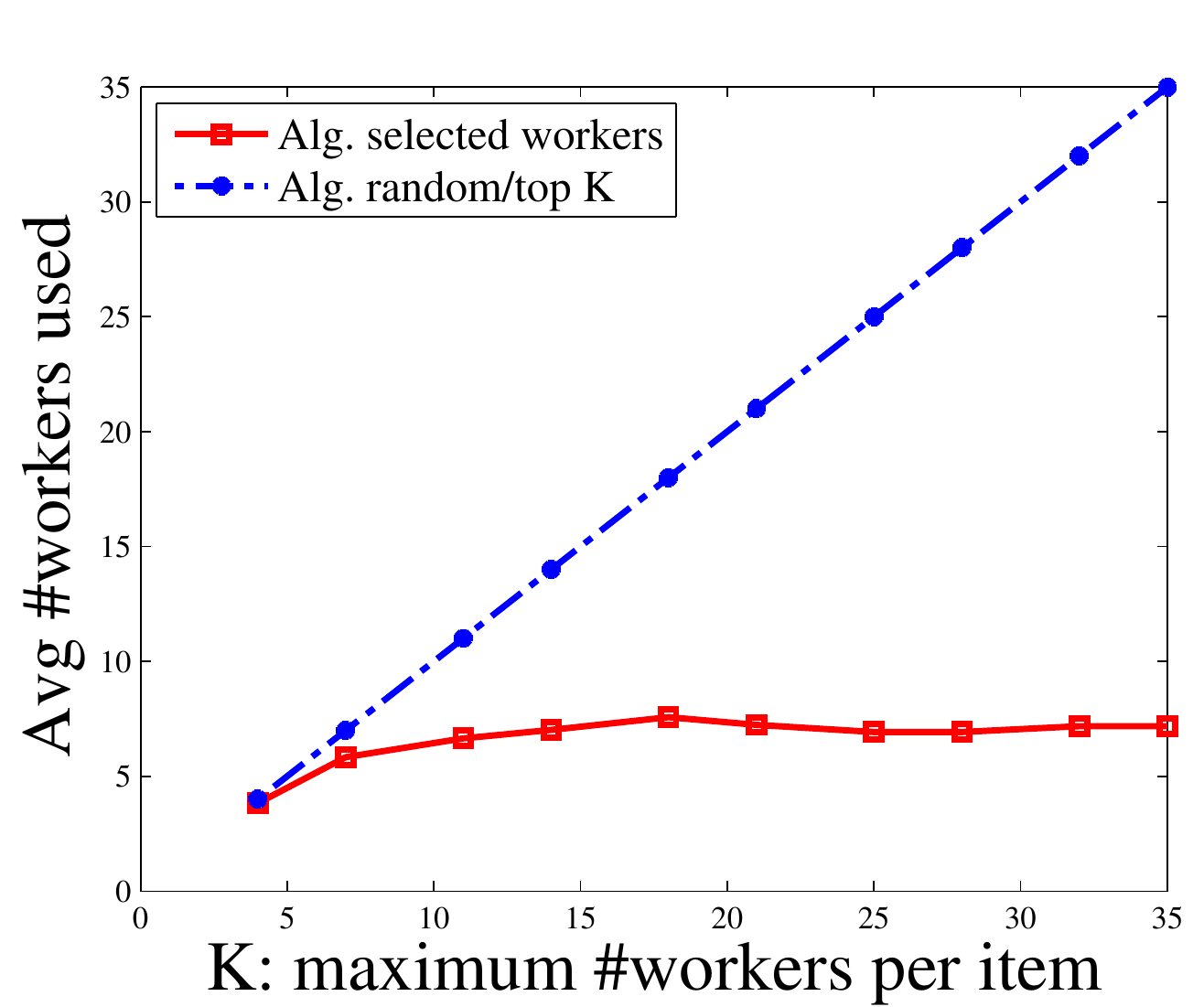}
&
\includegraphics[width=0.5\textwidth]{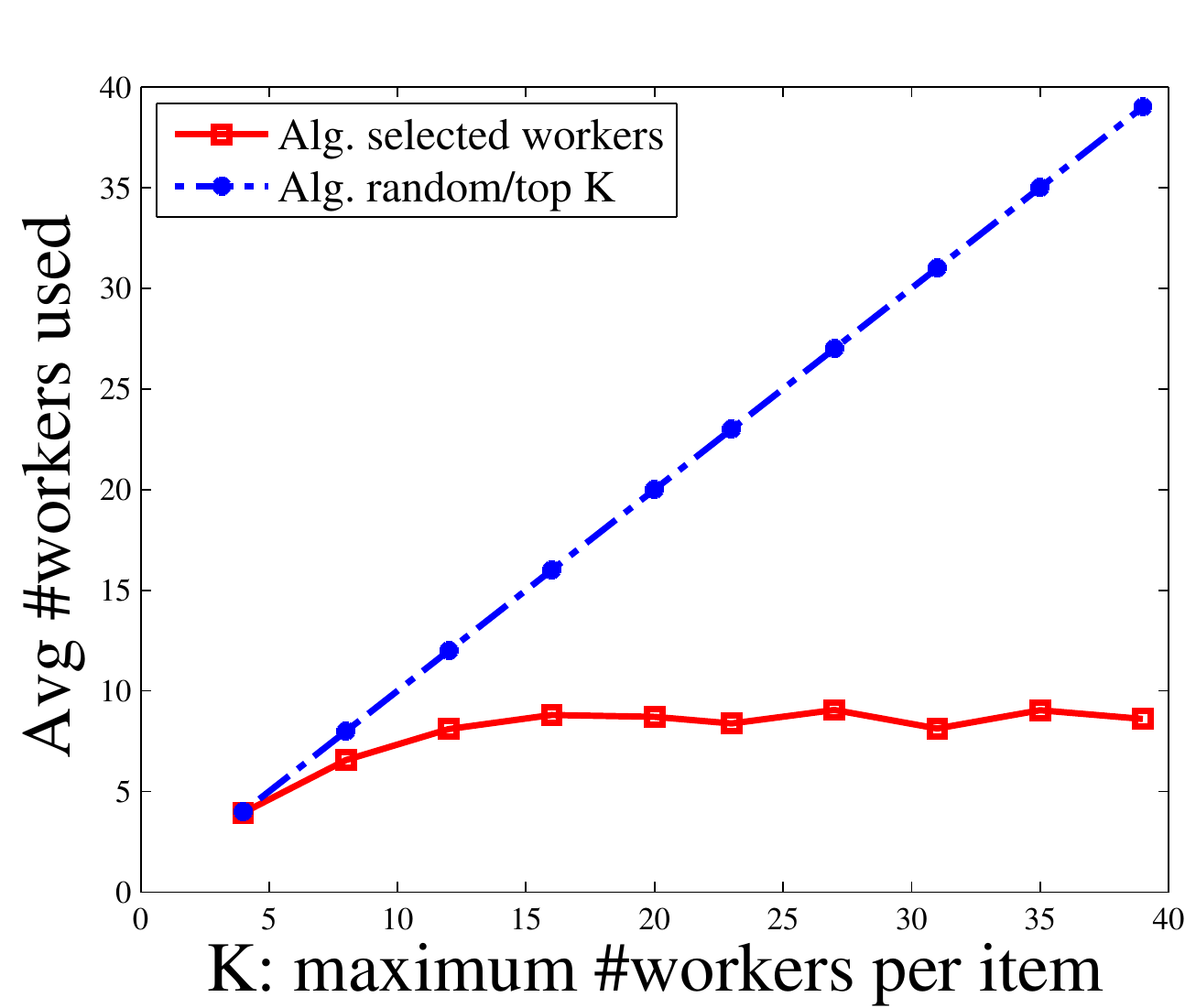}
\\
(a) & (b)
\end{tabular}
\end{center}
\caption{
The number of workers the algorithms actually used for each $\Kmax$ on the two read-world datasets in Figure~\ref{fig:disamb_birds} (Section \ref{sec:realdata} ):
 (a) The disambiguity dataset. (b) The bluebird dataset. 
}
\label{fig:K_selected_twoRealData}
\end{figure}

\end{document}